\newtheorem{theorem}{Theorem}
\newtheorem{assumption}{Assumption}
\newtheorem{lemma}{Lemma}
\newtheorem{proposition}{Proposition}
\def\BibTeX{{\rm B\kern-.05em{\sc i\kern-.025em b}\kern-.08em
    T\kern-.1667em\lower.7ex\hbox{E}\kern-.125emX}}
\begin{document}

\title{\vspace{-0.05in} Batching-Aware Joint Model Onloading and Offloading for Hierarchical Multi-Task Inference
\vspace{-0.1in}
}

\author{
\IEEEauthorblockN{
Seohyeon Cha\IEEEauthorrefmark{1},
Kevin Chan\IEEEauthorrefmark{2},  
Gustavo de Veciana\IEEEauthorrefmark{1},
Haris Vikalo\IEEEauthorrefmark{1}
}
\IEEEauthorblockA{\IEEEauthorrefmark{1}%
Department of Electrical and Computer Engineering, The University of Texas at Austin, Austin, TX, USA.\\
}
\IEEEauthorblockA{\IEEEauthorrefmark{2}%
DEVCOM Army Research Laboratory, Adelphi, MD, USA. 
}
\vspace{-0.3in}
}

\maketitle
\begin{abstract}
The growing demand for intelligent services on resource-constrained edge devices has spurred the development of collaborative inference systems that distribute workloads across end devices, edge servers, and the cloud. While most existing frameworks focus on single-task, single-model scenarios, many real-world applications (e.g., autonomous driving and augmented reality) require concurrent execution of diverse tasks including detection, segmentation, and depth estimation. In this work, we propose a unified framework to jointly decide which multi‑task models to deploy (``onload") at clients and edge servers, and how to route queries across the hierarchy (``offload") to maximize overall inference accuracy under memory, compute, and communication constraints. We formulate this as a mixed‑integer program and introduce J3O (Joint Optimization of Onloading and Offloading), an alternating algorithm that (i) greedily selects models to onload via Lagrangian-relaxed submodular optimization and (ii) determines optimal offloading via constrained linear programming. We further extend J3O to account for batching at the edge, maintaining scalability under heterogeneous task loads. Experiments show J3O consistently achieves over 97\% of the optimal accuracy while incurring less than 15\% of the runtime required by the optimal solver across multi-task benchmarks.
\end{abstract}

\begin{IEEEkeywords}
Hierarchical ML inference, computation offloading and batching, alternating optimization, submodularity
\end{IEEEkeywords}

\section{Introduction}

The rapid proliferation of edge devices including smartphones, surveillance cameras, and wearables, with possible latency and privacy requirements, has sparked interest in executing Machine Learning (ML)-based inference at the edge \cite{zhou2021device}. However, as state-of-the-art ML models continue to grow in size and complexity to achieve higher accuracy, their memory and compute requirements often exceed the capabilities of resource-constrained edge hardware \cite{villalobos2022machine, dhar2024empirical}. While model compression techniques and lightweight alternatives can help reduce resource usage, they often incur an accuracy drop, particularly in multi-task settings where a compact model must simultaneously support diverse inference tasks \cite{Matsubara_2025_WACV}.

Collaborative inference systems offer a promising alternative by distributing inference workloads across hierarchical computing tiers -- end devices, edge servers, and cloud platforms \cite{fan2022dnn, fresa2022offloading, beytur2024HI}. These systems typically use lightweight models on device to handle routine or latency-sensitive queries, while selectively offloading more demanding instances to upstream servers with larger models. However, most prior frameworks have focused on single-task, single-model settings (e.g., image classification), which limits their applicability. In contrast, real-world applications such as autonomous driving, augmented reality, and smart surveillance require concurrent execution of multiple tasks (e.g., detection, segmentation, and depth estimation) within the same inference pipeline \cite{ye2022taskprompter, neseem2023adamtl, doshi2022multi}.

Extending collaborative inference to multi-task scenarios presents new challenges. A single model trained on all tasks may suffer from degraded performance due to task interference and conflicting gradients \cite{standley2020tasks}. On the other hand, maintaining a large pool of single-task models can quickly overwhelm the memory and compute capacity of edge devices. A more scalable strategy is to maintain a small library of specialized multi-task models, each trained on a subset of related tasks, and dynamically select the best model for each inference request \cite{standley2020tasks, fifty2021efficiently, song2022efficient}. While this approach has been explored in centralized or cloud-based settings, model selection and deployment under tight system constraints in hierarchical, distributed environments remains largely unaddressed.

The joint model placement and inference offloading has previously been studied primarily in single-task settings, and has considered either a single model or a collection of models. Early efforts in edge intelligence~\cite{li2018zalad, kang2017neurosurgeon, li2019edge, he2020jpdra} explored collaborative execution of a single Deep Neural Network (DNN) across edge and cloud, using techniques such as DNN partitioning and early exiting to reduce end-to-end latency by balancing communication and computation. More recent work has extended this to multi-model scenarios, where systems must select, cache, and place multiple candidate models across heterogeneous edge resources \cite{hudson2021qos, sada2024energy, zhang2021deep, chai2024joint, chen2022online}. These approaches aim to optimize inference accuracy, latency, or energy under resource constraints, typically by coordinating model selection and query routing between edge and cloud. However, they remain largely restricted to single-task inference and shallow hierarchies, without jointly optimizing model placement and task routing across clients, edge and cloud. Table~\ref{tab:my_label} summarizes the distinctions between our approach and these prior methods.

\begin{figure*}[t]
    \centering
    \begin{subfigure}[t]{0.24\linewidth}
        \centering
        \includegraphics[width=1.02\linewidth]{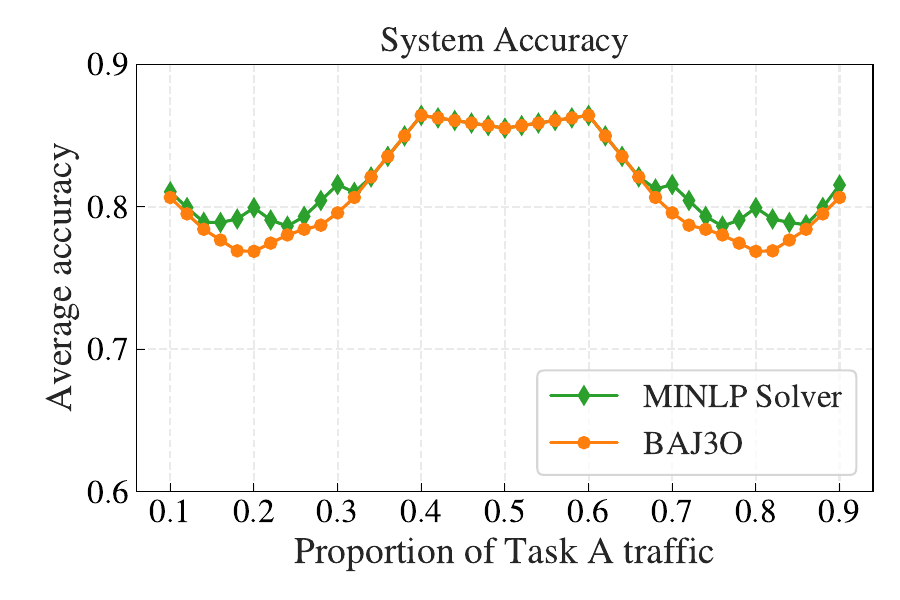}
        \label{}
        \vspace{-0.21in}
    \end{subfigure}
    \hfill
    \begin{subfigure}[t]{0.24\linewidth}
        \centering
        \includegraphics[width=1.02\linewidth]{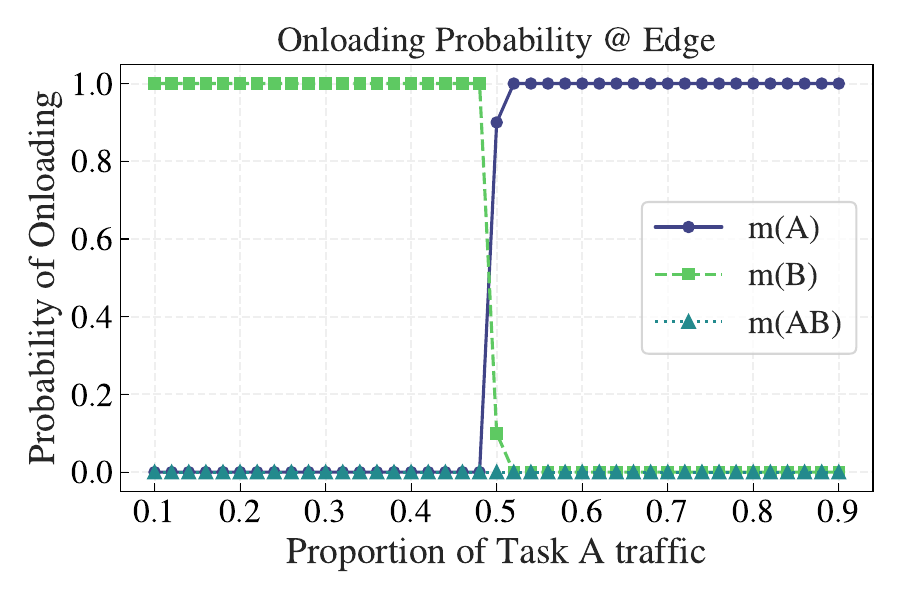}
        \label{}
        \vspace{-0.21in}
    \end{subfigure}
    \hfill
    \begin{subfigure}[t]{0.24\linewidth}
        \centering
        \includegraphics[width=\linewidth]{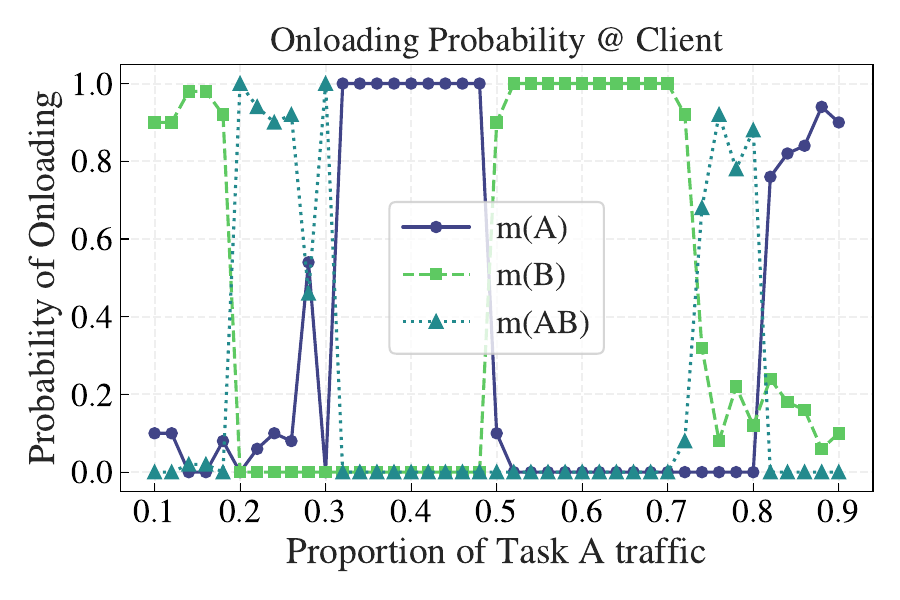}
        \label{fig:dom-loss}
        \vspace{-0.21in}
    \end{subfigure}
    \hfill
    \begin{subfigure}[t]{0.24\linewidth}
        \centering
        \includegraphics[width=1.02\linewidth]{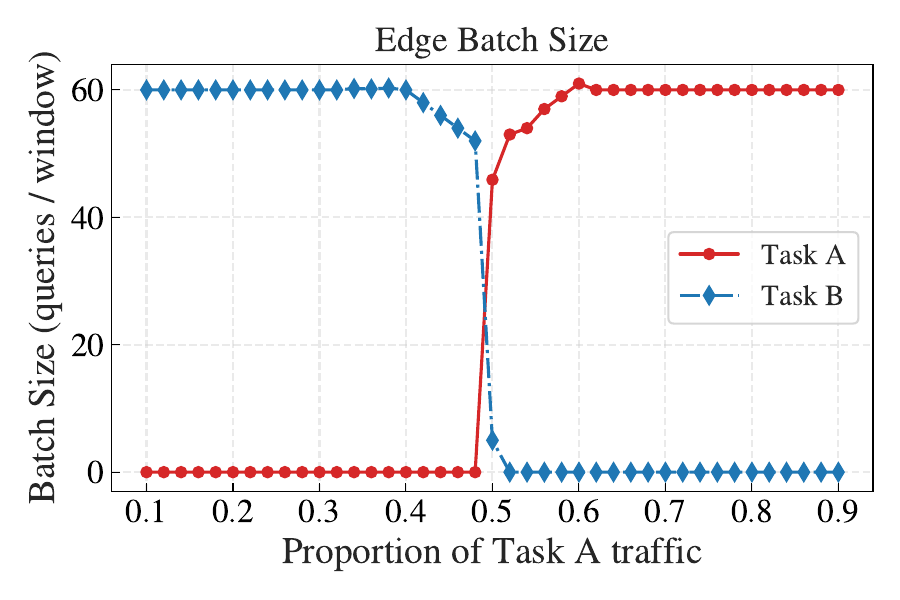}
        \label{fig:dom-loss}
        \vspace{-0.21in}
    \end{subfigure}
    \begin{subfigure}[t]{0.24\linewidth}
        \centering
        \includegraphics[width=1.02\linewidth]{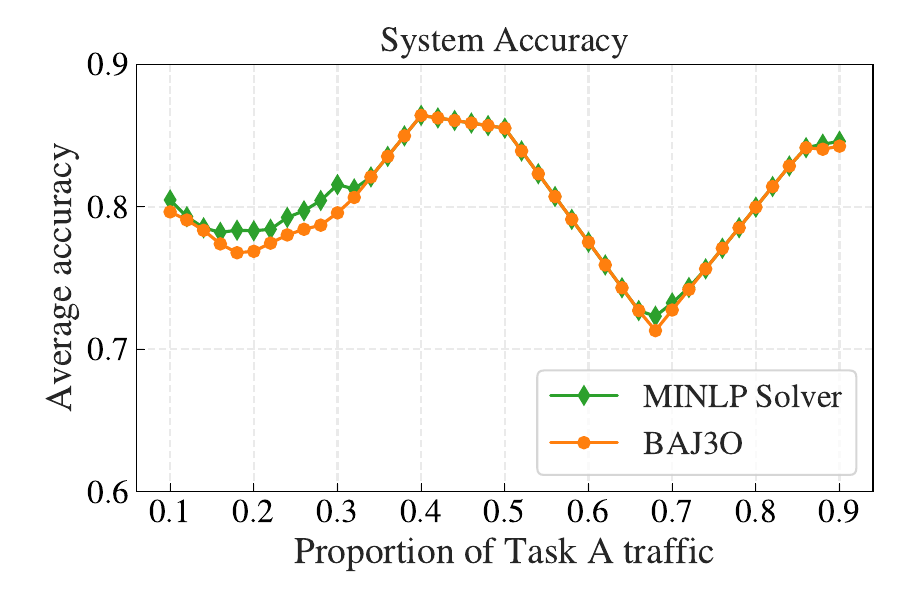}
        \label{}
        \vspace{-0.21in}
        \caption{Accuracy}
    \end{subfigure}
    \hfill
    \begin{subfigure}[t]{0.24\linewidth}
        \centering
        \includegraphics[width=1.02\linewidth]{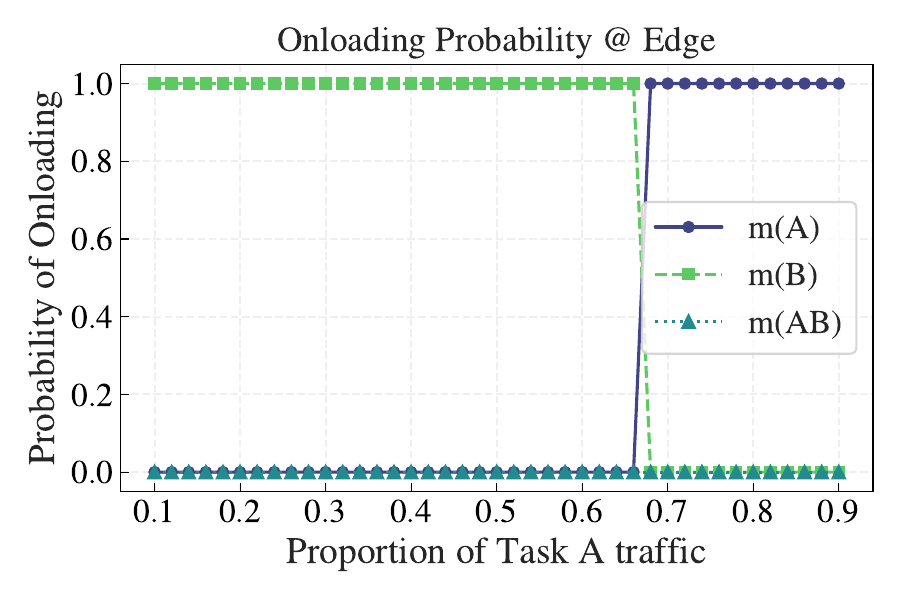}
        \label{}
        \vspace{-0.21in}
        \caption{Edge Onloading Decision}
    \end{subfigure}
    \hfill
    \begin{subfigure}[t]{0.24\linewidth}
        \centering
        \includegraphics[width=1.02\linewidth]{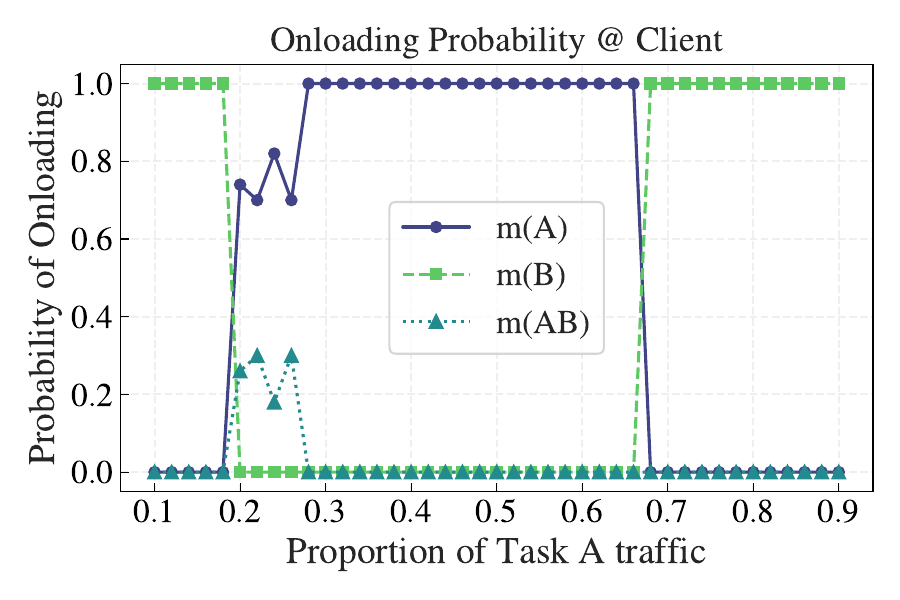}
        \label{fig:dom-loss}
        \vspace{-0.21in}
        \caption{Client Onloading Decision}
    \end{subfigure}
    \hfill
    \begin{subfigure}[t]{0.24\linewidth}
        \centering
        \includegraphics[width=1.02\linewidth]{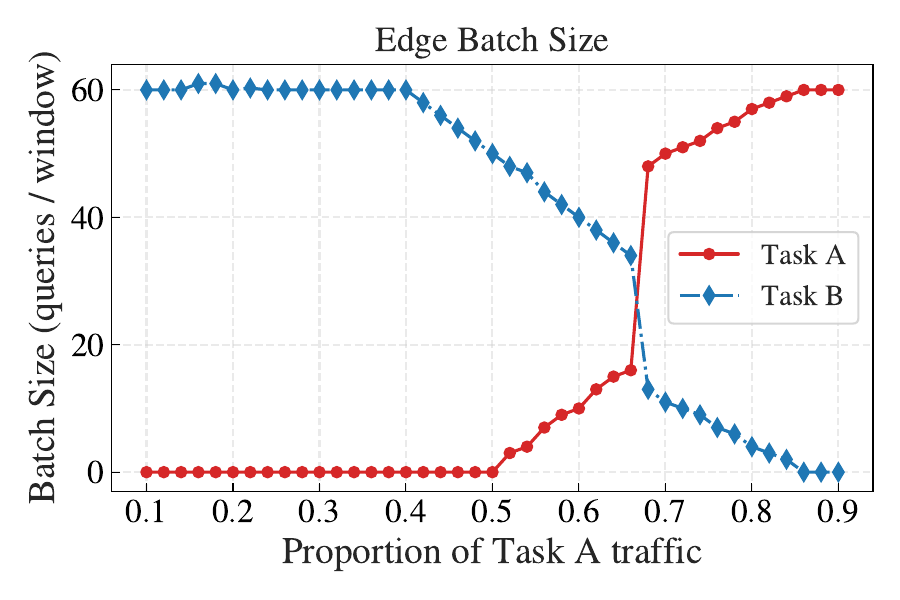}
        \label{fig:dom-loss}
        \vspace{-0.21in}
        \caption{Batch Size}
    \end{subfigure}
    \caption{We compare (top) homogeneous and (bottom) heterogeneous task load distributions across clients. As task A becomes dominant, the optimizer adapts model placement and offloading strategies to maximize batching efficiency and system accuracy.}
    \label{fig:impact-load}
\vspace{-0.18in }
\end{figure*}

\begin{table}[t]
    \centering
    \footnotesize
    \caption{Comparison of related works by objective, model granularity, task scope, and hierarchy of models for inference.}
    \vspace{-0.075in}
    \begin{tabular}{c|ccccc}
        \bottomrule
         & \textbf{Objective} & \textbf{Model} & \textbf{Task} & \textbf{Hierarchy}\\
        \hline
        \cite{hudson2021qos} & Accuracy \& Latency & Multi & Single & Single-Tier\\
        \cite{chai2024joint} & Latency \& Energy & Multi & Single & Single-Tier\\
        \cite{chen2022online} & Latency \& Energy & Multi & Single & Two-Tier \\
        \cite{sada2024energy, zhang2021deep} & Accuracy & Multi & Single  & Two-Tier \\ 
        \cite{wu2024share} & Latency & Multi & Multi & Single-Tier \\
        \hline
        Ours & Accuracy & Multi & Multi & Multi-Tier \\ 
         \toprule
    \end{tabular}
    \label{tab:my_label}
    \vspace{-0.25in}
\end{table}
A recent study~\cite{wu2024share} explores joint multi-task model deployment and task offloading in vehicular edge computing, where vehicles send inference queries to roadside units (RSUs) equipped with shared multi-task models. However, their ``one-model-fits-all" approach limits task specialization and may underperform compared to frameworks that rely on tailored models. The framework also lacks hierarchical offloading beyond RSUs, leaving edge device and cloud resources unused. The broader problem of jointly optimizing multi-task model placement and hierarchical task routing under system constraints remains largely unexplored.

To address these challenges, we propose a unified framework for joint model onloading and hierarchical offloading in distributed multi-task inference systems. Each client hosts a tailored subset of multi-task models and selectively offloads queries to higher-tier edge servers. In turn, edge servers may forward difficult requests to the cloud, which stores full-capacity models. The framework jointly determines (i) which models to onload at the client and edge levels and (ii) how to route queries across system tiers to maximize overall accuracy under memory, compute, and communication constraints. To efficiently solve this combinatorial problem, we introduce the \textbf{J}oint \textbf{O}ptimization of \textbf{O}nloading and \textbf{O}ffloading (\textbf{J3O}) algorithm, an alternating method that combines greedy submodular maximization for model onloading with constrained linear programming for task offloading.

We further extend our formulation to incorporate batching, allowing edge servers to aggregate queries into homogeneous batches to improve GPU utilization. This design aligns with modern accelerator architectures, where batching amortizes fixed inference overhead. In order to bound latency, we impose a time-window constraint on batch formation. To solve the extended problem, we develop \textbf{BAJ3O} (\textbf{B}atching-\textbf{A}ware \textbf{J}oint \textbf{O}ptimization of \textbf{O}nloading, \textbf{O}ffloading), which augments J3O with a batching-related latency constraint and integrates it seamlessly into the alternating optimization loop.

The main contributions of this work are:
\begin{itemize}
    \item We formulate a new joint model onloading and offloading problem for hierarchical multi-task inference, aiming to maximize system-wide accuracy under memory, compute, and communication constraints. This formulation captures task-aware model specialization, multi-tier coordination, and shared resource budgets.
    \item We propose an alternating optimization algorithm with provable performance guarantees, \textbf{J3O} (\textbf{J}oint \textbf{O}ptimization of \textbf{O}nloading and \textbf{O}ffloading), which decomposes the mixed-integer nonlinear program into two tractable subproblems: greedy submodular model selection and LP-based task offloading.
    \item We extend the formulation to incorporate batching for efficient GPU utilization. The resulting \textbf{BAJ3O} algorithm (\textbf{B}atching-\textbf{A}ware \textbf{J}oint \textbf{O}ptimization of \textbf{O}nloading, \textbf{O}ffloading) introduces a linear surrogate for batching constraints and integrates it into the J3O framework.
    \item We evaluate our approach on standard multi-task benchmarks and show consistent improvements in the accuracy–runtime trade-off over the baselines.
\end{itemize}

\subsection{Motivating Example}
To illustrate how joint model onloading and task offloading interact under system constraints, we present a controlled experiment examining two key factors: (i) the effect of task imbalance and traffic distribution, and (ii) the impact of batching overheads on system behavior. We consider a highly simplified two-tier architecture with one edge server and five clients. Each client and the edge node may onload a single model, and the system supports two tasks: A and B.

Three multi-task models are available: (1) $m(A)$, achieving 90\% accuracy on task A, 10\% on task B; (2) $m(B)$, 90\% on A, 10\% on B; and (3) $m(AB)$, 60\% on both tasks. Client-side models are compressed and achieve 90\% of their original accuracy. The edge can host any of the three models at full accuracy. The overall system is constrained by a shared communication bandwidth across clients, per-client compute budgets, and latency-bounded batching at the edge.

\paragraph{Impact of Multi-task Traffic}
We first explore how task imbalance affects model deployment. In this setting, the system faces tight communication and client-compute constraints. We vary the global fraction of task A traffic $p_A$ from 0.1 to 0.9 while keeping total query volume fixed.

\textbf{Balanced vs. skewed task loads.}
Fig.~\ref{fig:impact-load} (top) shows the outcome under a homogeneous client-task distribution, i.e., where all clients observe the same fraction of A and B queries.  When global task traffic is balanced ($p_A\!=\!0.5$), clients offload both tasks to the edge, forming batches of both tasks. As task A becomes dominant, the edge allocates most capacity to A. Once global task A loads exceed the offloading budget, clients begin to onload models to handle residual traffic. Initially, they adopt task-specific models; as traffic becomes more mixed, 
$m(AB)$ is selected instead. Batch sizes remain stable, but the onloaded model mix adapts to the evolving load.

Fig.~\ref{fig:impact-load} (bottom) shows the heterogeneous case, where task B workload is concentrated on a single ``hot" client (70\% of its queries), while task A remains uniformly distributed. The system continues to prioritize edge offloading of task B due to batching benefits for the hot client. This shifts the transition threshold at which onloading becomes necessary: edge onloading to 
$m(B)$ persists even when global traffic appears balanced, delaying the need to onload $m(B)$ at clients. When client constraints tighten, the system begins using suboptimal models locally, slightly reducing overall accuracy.

\begin{figure}[t]
    \centering
    \begin{subfigure}[t]{0.493\linewidth}
        \centering
        \includegraphics[width=1.02\linewidth]{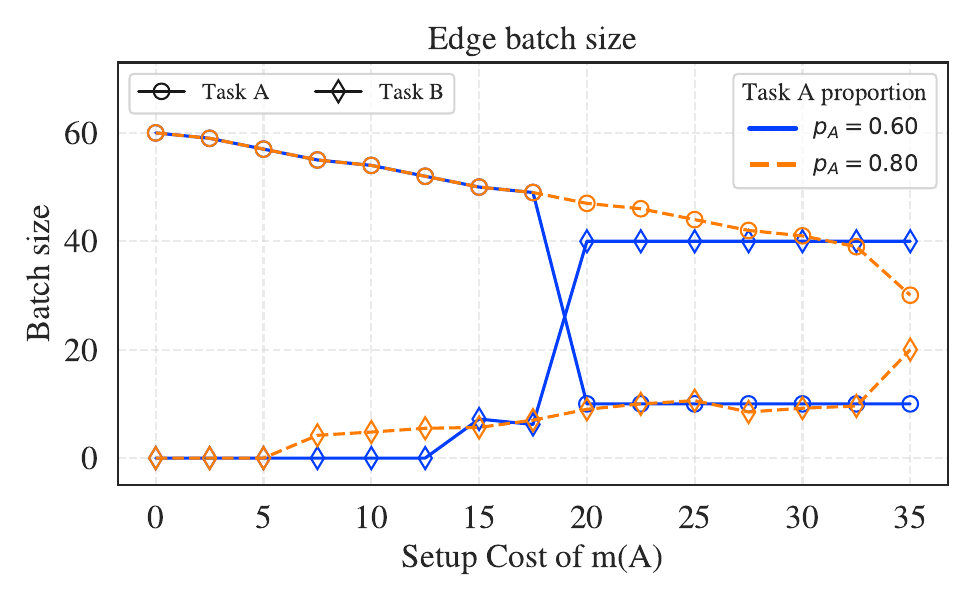}
        \label{}
        \vspace{-0.21in}
    \end{subfigure}
    \hfill
    \begin{subfigure}[t]{0.493\linewidth}
        \centering
        \includegraphics[width=1.02\linewidth]{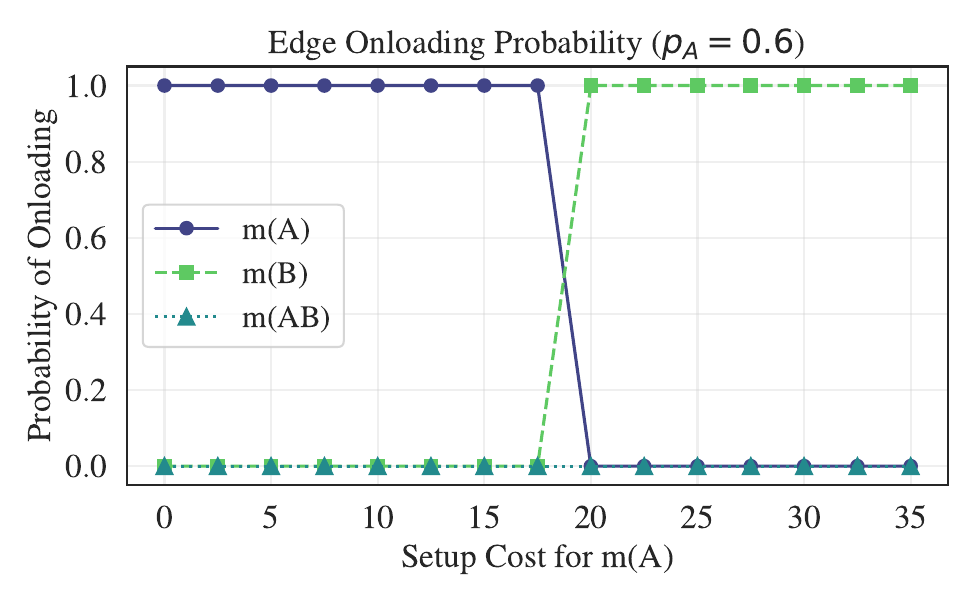}
        \label{}
        \vspace{-0.21in}
    \end{subfigure}
    \hfill
    \vspace{-0.21in}
    \caption{Higher setup cost for $m(A)$ prompts the system to shift to alternative models or local execution.}
    \label{fig:impact-setup}
\vspace{-0.2 in }
\end{figure}

\paragraph{Impact of Batching}
Next, we consider the impact of batching. Each edge model has a fixed launch cost per batch, better amortized with larger batches, but limiting queries that can be served within a latency bound.
We examine how increasing this setup cost influences the optimizer's decisions.

\textbf{Higher setup cost reduces batching efficiency.}
Fig.~\ref{fig:impact-setup} shows the impact of gradually increasing the setup cost for 
$m(A)$. As the cost rises, batches shrink, reducing the benefit of offloading A. The system compensates by switching to alternative models: at task A proportion $p_A=0.6$, it adopts $m(B)$ at the edge and relies on client-side model $m(A)$ to handle A tasks. This shows how elevated setup costs can alter onloading strategies to optimize overall system efficiency. 

These behaviors highlight the complex interplay between model accuracy, batching efficiency, and system constraints. Even in this simplified setting, optimal decisions emerge from a careful balance of onloading, offloading, and batching, motivating the need for a principled optimization framework which we develop in the next sections.



\section{System Model}
We consider a three-tier inference system comprising clients $\mathcal{C} = \{1, \dots, C\}$, edge servers $\mathcal{E} = \{1, \dots, E\}$, and a centralized cloud. Each client $c$ is connected to a designated edge server $e$, represented by a binary variable $y^{c,e} \in \{0,1\}$, and all edge servers connect to the cloud. The system supports a set of ML tasks $\mathcal{T} = \{1, \dots, T\}$ using a shared library of multi-task models $\mathcal{M} = \{1, \dots, M\}$, each offering distinct accuracy-cost trade-offs. Clients generate streams of multi-task inference queries and may \emph{onload} a subset of models locally, subject to resource constraints. Queries can also be \emph{offloaded} to the assigned edge server, which can either serve them using its own onloaded models or forward them to the cloud for maximum accuracy. This yields a hierarchical inference strategy, where model placement and task routing decisions jointly determine whether inference occurs at the client, edge, or cloud. Our goal is to maximize the system-wide average inference accuracy, weighted by task load.


\subsection{Inference Model}


Each client $c \in \mathcal{C}$ generates a stream of inference queries, each corresponding to a task $t \in \mathcal{T}$. These tasks may include workloads such as edge detection, semantic segmentation, or domain-specific classification. The long-term task demand at client $c$ is represented by a vector $\boldsymbol{\lambda}^c = (\lambda_t^c)_{t=1}^T$, where $\lambda_t^c$ denotes the average arrival rate of queries for task $t$, measured in jobs per unit time.

To support client workloads, a library of pre-trained models $\mathcal{M}$ is stored at the cloud. Each model $m \in \mathcal{M}$ may support one or more tasks. Clients and edge servers must select which models to \emph{onload} locally, subject to their resource budgets. We define binary variables $x_m^c, x_m^e \in \{0,1\}$ to indicate whether model $m$ is onloaded at client $c$ or edge server $e$, respectively. Given a set of onloaded models, each task query must be served by exactly one model at each tier. To capture this, we introduce binary selection variables $z_{m,t}^c$ and $z_{m,t}^e$, which indicate whether model $m$ is used to serve task $t$ at client $c$ or edge server $e$, respectively. For convenience, we define the model onloading vectors $\mathbf{x}^c = (x_m^c)_{m \in \mathcal{M}}$ and $\mathbf{x}^e = (x_m^e)_{m \in \mathcal{M}}$, and the task-model selection vectors $\mathbf{z}^c = (z_{m,t}^c)_{m \in \mathcal{M},\; t \in \mathcal{T}}$ and $\mathbf{z}^e = (z_{m,t}^e)_{m \in \mathcal{M},\; t \in \mathcal{T}}$.

Offloading decisions govern how task queries are routed through the system hierarchy. We define continuous variables $o_t^c, o_t^{c,e} \in [0,1]$, where $o_t^c$ denotes the fraction of task-$t$ queries from client $c$ that are offloaded to its assigned edge server, and $o_t^{c,e}$ denotes the fraction of those edge-level queries that are further offloaded to the cloud. Setting $o_t^c = 0$ corresponds to full local execution at the client, while $o_t^c = 1$ means all task-$t$ queries are sent to the edge. Similarly, $o_t^{c,e} = 1$ implies full offloading from edge to cloud. For convenience, we denote the client- and edge-level offloading vector as $\mathbf{o}^c = (o_t^c)_{t \in \mathcal{T}}$ and $\mathbf{o}^{c,e} = (o_t^{c,e})_{t \in \mathcal{T}}$, respectively.

Each client $c$ is associated with exactly one edge server $e$, indicated by a binary variable $y^{c,e} \in \{0,1\}$, such that $\sum_{e \in \mathcal{E}} y^{c,e} = 1$. Clients connected to the same edge server share communication bandwidth, and all edge servers share a common communication link to the cloud. As we describe in the following section, both model onloading and task offloading decisions must respect memory, compute, and communication constraints across all tiers.

\subsection{Accuracy Modeling}

We define the system-wide inference accuracy as the average task accuracy, weighted by the long-term task demands across all clients. Let $\mathbf{x} = ((\mathbf{x}^c)_{c}, (\mathbf{x}^e)_{e})$ denote the model onloading decisions at the client and edge levels, and let $\mathbf{z} = ((\mathbf z^c)_c, (\mathbf z^e)_e)$ capture the selected inference model for each task at each client and edge server. To ensure that only onloaded models are used for inference, we enforce that
\vspace{-0.04in}
\begin{subequations}
\begin{align}
    & z_{m,t}^c \le x_m^c, \quad \sum\nolimits_{m \in \mathcal{M}}\,z_{m,t}^c = 1, \;\forall m,t,c, \label{eq:assign-c}\\
    & z_{m,t}^e \le x_m^e, \quad \sum\nolimits_{m \in \mathcal{M}} \,z_{m,t}^e = 1, \;\forall m,t,e. \label{eq:assign-e}
\end{align}
\label{const:assign-x-z}
\end{subequations}
Let $\mathbf{o}\!=\!((\mathbf o^c)_c, (\mathbf o^{c,e})_{c,e})$ denote the offloading decisions from clients to edges and from edges to the cloud. Define the total system demand as $\lambda_\text{tot}\!:=\! \sum_{t\in\mathcal T}\sum_{c\in\mathcal C}\lambda_t^c$, and the normalized task demand for client $c$ as $\bar{\lambda}_t^c\!:=\!\lambda_t^c/\lambda_\text{tot}$. The overall system-wide accuracy is then
\vspace{-0.04in}
\begin{equation}
    \begin{aligned}
    F(\mathbf{x}, \mathbf{z}&, \mathbf{o}) = \sum_{c\in\mathcal C} F_\text{client}(\mathbf x^c, \mathbf z^c, \mathbf o^c) \\
    &+ \sum_{e\in \mathcal E}\sum_{c \in \mathcal C_e} [F_\text{edge}(\mathbf x^e, \mathbf z^e, \mathbf o^c, \mathbf o^{c,e}) + F_\text{cloud}(\mathbf o^{c,e})], \label{eq:main_obj}
    \end{aligned}
\end{equation}
where $\mathcal{C}_e := \{c\in\mathcal C\!\mid\! y^{c,e} = 1\}$ is the set of clients assigned to edge server $e$. This objective captures average task accuracy across client, edge, and cloud levels:

\textbf{(1) Client-side accuracy} reflects the accuracy of locally served queries,
\vspace{-0.05in}
\begin{align}
F_\text{client}(\mathbf{x}^c, \mathbf{z}^c, \mathbf{o}^c) = \sum_{t \in \mathcal{T}} \bar{\lambda}_t^c (1 - o_t^c) A_t^c(\mathbf{x}^c, \mathbf{z}^c),
\end{align}
where $A_t^c(\cdot)$ denotes the accuracy for task $t$ under client-side execution,
\vspace{-0.08in}
\begin{equation}
A_t^c(\mathbf{x}^c, \mathbf{z}^c) = \sum_{m \in \mathcal{M}} \sum_{e \in \mathcal{E}} y^{c,e} a_{m,t}^e x_m^c z_{m,t}^c.
\end{equation}
Here, $y^{c,e}$ indicates the client's edge assignment, and $a_{m,t}^e$ denotes the accuracy of model $m$ on task $t$ under edge-$e$ data distribution. This captures location-dependent accuracy variations in mobile settings. Tasks unsupported by a model are assigned zero accuracy. 

\textbf{(2) Edge-side accuracy} accounts for queries handled at the edge but not forwarded to the cloud, 
\vspace{-0.02in}
\begin{align}
\!\!F_\text{edge}(\mathbf x^e, \mathbf z^e, \mathbf o^c, \mathbf o^{c,e}) 
= {\sum\nolimits}_{t\in\mathcal T}\,\bar\lambda_t^{c} (o_t^c - o_t^{c,e}) A_t^e(\mathbf x^e, \mathbf z^e),
\end{align} 
where
\vspace{-0.1in}
\begin{equation}
A_t^e(\mathbf{x}^e, \mathbf{z}^e) = {\sum\nolimits}_{m \in \mathcal{M}}\,a_{m,t}^e\,x_m^e z_{m,t}^e
\end{equation}
denotes the accuracy of edge-level inference.

\textbf{(3) Cloud-side accuracy} captures queries ultimately served by the cloud,
\vspace{-0.1in}
\begin{align}
F_\text{cloud}(\mathbf o^{c,e}) = {\sum\nolimits}_{t\in\mathcal T}\,\,\bar\lambda_t^c o_t^{c,e} A_t^s,
\end{align}
where $A_t^s$ is the oracle-level accuracy for task $t$ at the cloud.

\subsection{Resource Constraints}
\subsubsection{Memory and Computation Constraints}
Clients and edge servers have limited memory capacity, denoted by $\mu^c$ and $\mu^e$, respectively. Each model $m \in \mathcal{M}$ requires $s_m$ bytes of memory. Onloaded models must fit within the available memory, i.e.,
\vspace{-0.03in}
\begin{subequations} \label{const:mem}
\begin{align}
{\sum\nolimits}_{m \in \mathcal{M}}\, x_m^c s_m \leq \mu^c, \label{const:mem-c} \\
{\sum\nolimits}_{m \in \mathcal{M}}\, x_m^e s_m \leq \mu^e. \label{const:mem-e}
\end{align}
\end{subequations}
We consider only model parameter memory, assuming input data is small relative to model size.

Inference also consumes compute resources. Let $w_m$ denote the per-query compute cost of model $m$. For client $c$, the expected compute load, weighted by task arrival rate $\lambda_t^c$ and local execution ratio $1 - o_t^c$, must not exceed its compute capacity $\beta^c$, i.e.,
\begin{equation}
\sum_{m \in \mathcal{M}} \sum_{t \in \mathcal{T}} \lambda_t^c (1 - o_t^c) \cdot w_m \cdot x_m^c z_{m,t}^c \le \beta^c.
\label{const:comp-c}
\end{equation}
Similarly, edge server $e$ must serve its assigned load within compute budget $\beta^e$, that is
\begin{equation}
\sum_{m \in \mathcal{M}} \sum_{t \in \mathcal{T}} \sum_{c \in \mathcal{C}_e} \lambda_t^c (o_t^c - o_t^{c, e}) \cdot w_m \cdot x_m^e z_{m,t}^e \le \beta^e.
\label{const:comp-e}
\end{equation}
We assume the cloud has sufficient capacity to store the entire model library and handle any forwarded queries.

\subsubsection{Shared Offloading Constraint}
Communication bandwidth is shared across multiple system tiers. Let $d_t$ denote the input data size for task $t$. For each edge server $e \in \mathcal{E}$, the total data offloaded from its assigned clients must not exceed its uplink bandwidth budget $\kappa^e$ (in bytes per unit time),
\begin{equation}
\sum_{t \in \mathcal{T}} \sum_{c \in \mathcal{C}_e} \lambda_t^c o_t^c  d_t \leq \kappa^e.
\label{const:comm-c}
\end{equation}
Similarly, the cloud server is subject to a global communication constraint $\kappa^s$. The total traffic forwarded from all edge servers must satisfy
\begin{equation}
\sum_{t \in \mathcal{T}} \sum_{e \in \mathcal{E}} \sum_{c \in \mathcal{C}_e} \lambda_t^c o_t^{c,e} d_t \leq \kappa^s.
\label{const:comm-e}
\end{equation}

These constraints regulate network-level data transmission and implicitly bound system latency. As bandwidth approaches saturation, queuing and congestion may increase, causing delay. Limiting offloading volumes helps maintain timely task execution by preventing overload-induced latency.

\section{Joint Onloading and Offloading}
\subsection{Problem Formulation}
We formulate the joint model onloading and offloading problem as the maximization of system-wide average inference accuracy across all tasks,
\vspace{-0.05in}
\begin{subequations}
\begin{align}
    P1:  \max_{\mathbf x, \mathbf z, \mathbf o} & \;F(\mathbf x, \mathbf z, \mathbf o) \label{eq:joint_prob}\\
    \text{s.t. }  & \eqref{const:assign-x-z}, \eqref{const:mem}-\eqref{const:comm-e}, \\
    & o_t^{c,e} \le o_t^c \, y^{c,e}, \quad \forall t, c, e\label{const:o_t_ce_assign} \\
    & x_m^c, z_{m,t}^c \in \{0, 1\}, \quad \forall m,t,c\label{const:x_z_c_range} \\ 
    & x_m^e, z_{m,t}^e \in \{0, 1\}, \quad\forall m,t,e\label{const:x_z_e_range} \\
    & o_t^c, o_t^{c,e} \in [0,1], \quad\forall t,c,e. \label{const:o_range}
\end{align}
\label{eq:joint_opt}
\end{subequations}
Here, the objective $F(\cdot)$ captures the accuracy contributions from client-, edge-, and cloud-level inference as previously defined. The constraints enforce valid model selection (\ref{const:assign-x-z}), resource limits (\ref{const:mem}–\ref{const:comm-e}), hierarchical offloading consistency (\ref{const:o_t_ce_assign}), and variable domains (\ref{const:x_z_c_range}–\ref{const:o_range}).

This is a \textit{mixed-integer nonlinear program} (MINLP). Even a simplified version reduces to the 0-1 knapsack problem, which is NP-hard. Hence, the full joint onloading–offloading problem is also NP-hard.

\subsection{Problem Decomposition}

To address the complexity of the original MINLP, we decompose it into two tractable subproblems -- model onloading and inference offloading. The onloading selects models to deploy and forms a constrained submodular maximization. The offloading assigns inference queries across the hierarchy and reduces to a constrained linear program.

\vspace{0.05in}
\noindent\textbf{Model Onloading Subproblem.} Given fixed offloading decisions $\mathbf{o}$, model onloading decomposes across clients and edges, each forming a constrained submodular maximization. For client $c$, the problem is
\begin{align}
& \max_{\mathbf{x}^c, \mathbf{z}^c}
F_{\text{client}}(\mathbf{x}^c, \mathbf{z}^c) := {\sum\nolimits}_{t \in \mathcal{T}} \bar\lambda_t^c (1 - o_t^c) A_t^c(\mathbf{x}^c, \mathbf{z}^c). \label{eq:onload_prob_c} \\
& \quad \text{s.t.} \quad \eqref{eq:assign-c}, \eqref{const:mem-c}, \eqref{const:comp-c}, \eqref{const:x_z_c_range}. \notag
\end{align}
This maximizes average client-side accuracy subject to memory, compute, and assignment constraints. Let $\mathcal{M}^c \subseteq \mathcal{M}$ be the models selected by client $c$. The objective simplifies to
\begin{equation}
f(\mathcal{M}^c) = {\sum\nolimits}_{t \in \mathcal{T}} \bar\lambda_t^c (1 - o_t^c) \max_{m \in \mathcal{M}^c} a_{m,t}^e,
\end{equation}
where $a_{m,t}^e$ captures the model's task accuracy under the data distribution at edge $e$, to which client $c$ is assigned.

\begin{proposition}
For a fixed offloading decision $\mathbf o^c$, the client-side objective $f(\mathcal{M}^c)$ is monotone and submodular.
\end{proposition}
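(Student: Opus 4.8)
The plan is to verify the two defining properties directly from the closed form
\[
f(\mathcal{M}^c) = \sum_{t \in \mathcal{T}} \bar\lambda_t^c (1 - o_t^c) \max_{m \in \mathcal{M}^c} a_{m,t}^e,
\]
treating $f$ as a nonnegative linear combination (with weights $\bar\lambda_t^c(1-o_t^c) \ge 0$, which are fixed once $\mathbf{o}^c$ is fixed) of the per-task functions $g_t(\mathcal{M}^c) := \max_{m \in \mathcal{M}^c} a_{m,t}^e$. Since monotonicity and submodularity are both preserved under nonnegative linear combinations, it suffices to establish these properties for each $g_t$ and then invoke closure. I would also adopt the convention $\max_{m \in \emptyset} a_{m,t}^e = 0$ (consistent with the paper's stipulation that unsupported tasks receive zero accuracy and that accuracies $a_{m,t}^e \ge 0$), so that $f(\emptyset) = 0$ and the functions are well-defined on all subsets.

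First I would prove monotonicity: for $\mathcal{A} \subseteq \mathcal{B} \subseteq \mathcal{M}$, the maximum of $a_{m,t}^e$ over the larger index set $\mathcal{B}$ is at least the maximum over $\mathcal{A}$, so $g_t(\mathcal{A}) \le g_t(\mathcal{B})$; summing with nonnegative weights gives $f(\mathcal{A}) \le f(\mathcal{B})$. Next I would prove submodularity of each $g_t$ via the diminishing-returns characterization: for $\mathcal{A} \subseteq \mathcal{B}$ and any $m^\star \notin \mathcal{B}$, I need
\[
g_t(\mathcal{A} \cup \{m^\star\}) - g_t(\mathcal{A}) \;\ge\; g_t(\mathcal{B} \cup \{m^\star\}) - g_t(\mathcal{B}).
\]
Writing $\alpha = g_t(\mathcal{A})$, $\beta = g_t(\mathcal{B})$, and $v = a_{m^\star,t}^e$, monotonicity gives $\alpha \le \beta$, and the two sides equal $\max(\alpha, v) - \alpha = (v-\alpha)^+$ and $\max(\beta, v) - \beta = (v-\beta)^+$ respectively; since $x \mapsto (v-x)^+$ is nonincreasing and $\alpha \le \beta$, the inequality follows. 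Finally, I would state the closure step: a nonnegative weighted sum of monotone submodular functions is monotone submodular, hence $f(\mathcal{M}^c) = \sum_t \bar\lambda_t^c(1-o_t^c)\, g_t$ inherits both properties.

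One subtlety worth addressing explicitly is the reduction from the optimization problem in \eqref{eq:onload_prob_c} — stated over $(\mathbf{x}^c, \mathbf{z}^c)$ — to the set function $f(\mathcal{M}^c)$: for a fixed onloaded set $\mathcal{M}^c$, the assignment constraint \eqref{eq:assign-c} together with the objective means the optimal $\mathbf{z}^c$ simply routes each task $t$ to its best available model, which is exactly why $A_t^c$ collapses to $\max_{m \in \mathcal{M}^c} a_{m,t}^e$; I would note this equivalence briefly so the set-function formulation is justified. I do not anticipate a genuine obstacle here — the result is a textbook-style fact about coverage/max-type functions — so the main care needed is just bookkeeping: handling the empty set convention, confirming the weights are nonnegative (which requires $o_t^c \le 1$, guaranteed by \eqref{const:o_range}), and citing the closure property cleanly rather than re-deriving it.
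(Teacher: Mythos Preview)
Your proposal is correct and follows essentially the same approach as the paper: both establish diminishing returns of the per-task function $\max_{m\in S} a_{m,t}^e$ and then aggregate across tasks with the nonnegative weights $\bar\lambda_t^c(1-o_t^c)$. Your presentation is slightly tidier---you invoke closure under nonnegative combinations and the monotonicity of $x\mapsto(v-x)^+$ instead of the paper's explicit two-case analysis, and you also spell out the empty-set convention and the reduction from $(\mathbf{x}^c,\mathbf{z}^c)$ to $\mathcal{M}^c$---but the underlying argument is the same.
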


\begin{proof}
Monotonicity follows directly -- adding a model to $\mathcal{M}^c$ cannot reduce the maximum accuracy for any task.

To show submodularity, let $\mathcal{S} \subseteq \mathcal{T} \subseteq \mathcal{M}^c$ and $m \in \mathcal{M}^c \setminus \mathcal{T}$. For each task $t \in \mathcal{T}$, define $a_{\mathcal{S}}(t) := \max_{m' \in \mathcal{S}} a_{m',t}^e$ and $a_{\mathcal{T}}(t) := \max_{m' \in \mathcal{T}} a_{m',t}^e$, so $a_{\mathcal{S}}(t) \le a_{\mathcal{T}}(t)$. The marginal gain of adding $m$ to set $\mathcal S$ is $\Delta_f(t; m \mid \mathcal{S}) = \bar\lambda_t^c (1 - o_t^c) \left( \max\{a_{\mathcal{S}}(t), a_{m,t}^e\} - a_{\mathcal{S}}(t) \right)$. Hence, by submodularity, the marginal gain must exhibit diminishing returns, i.e., $\Delta_f\!\bigl(t; m \mid \mathcal{S}\bigr)\;\ge\;
\Delta_f\!\bigl(t; m \mid \mathcal{T}\bigr)$, which we verify as follows:
\begin{itemize}
\item[(1)] If $a_{m,t}^e \le a_{\mathcal{S}}(t)$, then the marginal gain is zero for both sets, i.e., $\Delta_f(t; m \mid \mathcal{S}) = \Delta_f(t; m \mid \mathcal{T}) = 0$. 
\item[(2)] If $a_{m,t}^e > a_{\mathcal{S}}(t)$, then $\Delta_f(t; m \mid \mathcal{S}) = \bar\lambda_t^c (1 - o_t^c) \left( a_{m,t}^e - a_{\mathcal{S}}(t) \right)$ and 
$\Delta_f(t; m \mid \mathcal{T}) = \bar\lambda_t^c (1 - o_t^c) \left( \max\{a_{m,t}^e,  a_{\mathcal{T}}(t)\} - a_{\mathcal{T}}(t) \right)$. Since $a_{\mathcal{S}}(t) \le a_{\mathcal{T}}(t)$, the difference $a_{m,t}^e - a_{\mathcal{S}}(t)$ is at least as large as $\max\{a_{m,t}^e, a_{\mathcal{T}}(t)\} - a_{\mathcal{T}}(t)$, proving the inequality.
\end{itemize}
Summing over $t$ confirms submodularity of $f(\mathcal{M}^c)$.
\end{proof}

Next, define the effective task load handled at edge server $e$ as $\lambda_t^e := \sum_{c \in \mathcal{C}e} \lambda_t^c (o_t^c - o_t^{c,e})$ and normalize it as $\bar\lambda_t^e = \lambda_t^e/\lambda_\text{tot}$. Since the cloud-side accuracy is fixed under given offloading rates, the edge-side subproblem becomes
\vspace{-0.05in}
\begin{align}
& \max_{\mathbf{x}^e, \mathbf{z}^e} 
F_{\text{edge}}(\mathbf{x}^e, \mathbf{z}^e) :={\sum\nolimits}_{t \in \mathcal{T}} \,\,\bar\lambda_t^e A_t^e(\mathbf{x}^e, \mathbf{z}^e)  \label{eq:onload_prob_e}\\
& \quad \text{s.t.} \quad
\eqref{eq:assign-e}, \eqref{const:mem-e}, \eqref{const:comp-e}, \eqref{const:x_z_e_range}. \notag
\vspace{-0.05in}
\end{align}
As with the client-side case, this objective defines a weighted coverage function over tasks, and is thus monotone and submodular in the set of selected model-task pairs. With fixed offloading, the full objective in \eqref{eq:main_obj} becomes the sum of two monotone submodular functions (i.e., client-side and edge-side terms) plus a constant cloud-side term $F_\text{cloud}(\cdot)$. Hence, the overall model onloading problem reduces to a monotone submodular maximization.

\vspace{0.05in}
\noindent\textbf{Offloading Subproblem}
Given fixed model selections $\mathbf{x}$ and task assignments $\mathbf{z}$ from the onloading stage, the offloading subproblem determines the optimal offloading rates $\mathbf{o}$ to maximize the average system accuracy while satisfying system-wide compute and communication constraints. The resulting formulation is a constrained linear program
\begin{align}
\max_{\mathbf{o}} \, &
\sum_{t \in \mathcal{T}} \sum_{e \in \mathcal E} \sum_{c \in \mathcal{C}_e} \bar\lambda_t^c [
(1 - o_t^c) A_t^c(\mathbf{x}^c, \mathbf{z}^c)\notag \\
& \qquad + (o_t^c - o_t^{c,e}) A_t^e(\mathbf{x}^e, \mathbf{z}^e)+ o_t^{c,e} A_t^s] \label{eq:offload-obj}\\
\text{s.t. } & \quad \eqref{const:comp-c}-\eqref{const:comm-e}, \eqref{const:o_t_ce_assign}, \eqref{const:o_range}. \notag
\end{align}

\section{Algorithm Design}
\subsection{Alternating Optimization}

To solve the joint model onloading and offloading problem $P1$ in \eqref{eq:joint_opt}, we develop an alternating optimization algorithm, \textbf{J3O} (\textbf{J}oint \textbf{O}ptimization of \textbf{O}nloading and \textbf{O}ffloading). Alternating optimization (AO) is well suited for large-scale problems where variables decompose naturally into subproblems that are easier to solve individually than jointly \cite{chen2017joint, bi2020joint}. In our case, the problem is split into two interleaved stages: (i) model onloading, addressed via greedy submodular maximization with Lagrangian relaxation, and (ii) offloading, formulated as a constrained linear program. Each subproblem is solved while fixing the variables of the other, and the process iterates until convergence.

\subsubsection{Greedy Submodular Maximization via Lagrangian Relaxation}
The onloading subproblem maximizes a monotone submodular objective over binary variables $\mathbf{x}$ and $\mathbf{z}$, subject to memory and compute constraints. While greedy algorithms guarantee a $(1 - 1/e)$ approximation under a single knapsack constraint \cite{sviridenko2004note, iyer2013submodular}, our problem includes compute constraints that couple both $\mathbf{x}$ and $\mathbf{z}$. These interactions preclude direct use of standard greedy methods, motivating a Lagrangian relaxation to decouple the dependencies and enable tractable optimization.

\subsubsection{Greedy Submodular Maximization via Lagrangian Relaxation}
Let $\boldsymbol{\alpha} = ((\alpha^c)_c, (\alpha^e)_e)$ denote the Lagrange multipliers for the client and edge compute constraints. The corresponding dual objective is
\vspace{-0.03in}
\begin{equation}
\begin{aligned}
\mathcal{L}(&\mathbf{x}, \mathbf{z}; \, \boldsymbol\alpha) = F(\mathbf x, \mathbf z; \mathbf o)\\
& - \sum_{c \in \mathcal{C}} \alpha^c \left( \sum_{m,t} \bar\lambda_t^c (1 - o_t^c) w_m x_m^c z_{m,t}^c - \beta^c/\lambda_\text{tot} \right) \\
& - \sum_{e \in \mathcal{E}} \alpha^e \left( \sum_{m,t,c} \bar\lambda_t^c (o_t^c - o_t^{c,e}) w_m z_{m,t}^e - \beta^e/\lambda_\text{tot} \right).
\end{aligned}
\end{equation}
The model onloading step is then reformulated as a Lagrangian dual problem
\vspace{-0.09in}
\begin{align}
\min_{\boldsymbol\alpha}\; \max_{\mathbf{x}, \mathbf{z}}\,\mathcal{L}(\mathbf{x}, \mathbf{z}; \boldsymbol{\alpha}) \quad
\text{s.t.} \quad \eqref{const:assign-x-z}, \eqref{const:mem}. \notag
\vspace{-0.03in}
\end{align}
For fixed multipliers $\boldsymbol\alpha$, the inner maximization decouples across nodes (clients, edge), yielding a \textit{monotone submodular maximization} under a linear memory constraint. Each resulting subproblem can be efficiently approximated using a \textit{greedy algorithm} with provable approximation guarantees. Specifically, the greedy procedure selects models iteratively by maximizing marginal gain per unit memory ${\Delta \mathcal{L}^v(m; \mathcal{M}_{(i)}^v)}/{s_m}$, where $\Delta \mathcal L^v(m; \mathcal{M}_{(i)}^v) = \mathcal{L}^v(\mathcal{M}_{(i)}^v \cup {m}) - \mathcal{L}^v(\mathcal{M}_{(i)}^v)$ is the gain in Lagrangian objective for node $v \in \mathcal{V} = \mathcal{C} \cup \mathcal{E}$ when adding model $m$ to the current model set $\mathcal{M}_{(i)}^v$ at iteration $i$. The full procedure is formalized as Algorithm~\ref{alg:GreedyLR}.

After solving the inner problem via the greedy algorithm, the Lagrange multipliers are updated using a subgradient method \cite{fisher1981lagrangian} as
\vspace{-0.04in}
\begin{equation}
\begin{aligned}
\alpha^c &\leftarrow\!\!\left[ \alpha^c + \frac{\eta^c}{\sqrt{k}} \left( \sum_{m,t} \bar\lambda_t^c (1 - o_t^c) w_m x_m^c z_{m,t}^c - \frac{\beta^c}{\lambda_\text{tot}} \hspace{-0.03in}\right) \right]^+\hspace{-0.12in}, \\
\alpha^e &\leftarrow\!\!\left[ \alpha^e + \! \frac{\eta^e}{\sqrt{k}} \left( \sum_{m,t,c} \bar\lambda_t^c (o_t^c - o_t^{c,e}) w_m z_{m,t}^e - \frac{\beta^e}{{\lambda_\text{tot}}} \hspace{-0.03in}\right) \right]^+. \hspace{-0.12in}
\label{eq:dual-update}
\end{aligned}
\vspace{-0.03in}
\end{equation}
Here, $k$ is the iteration index, $\eta^c$ and $\eta^e$ are step size parameters, and $[\cdot]^+$ denotes projection onto the non-negative orthant. The updates proceed until constraint violations fall below a target threshold, yielding a near-feasible and near-optimal solution to the original problem.

\subsubsection{Offloading Optimization via Linear Programming} Given fixed onloading decisions $(\mathbf{x}, \mathbf{z})$ and the resulting accuracy matrices $A_t^c(\cdot, \cdot)$, $A_t^e(\cdot, \cdot)$ and $A_t^s$, we optimize the continuous offloading variables $\mathbf{o}$ by solving a constrained linear program. This can be efficiently handled using standard solvers such as Gurobi or CPLEX, yielding optimal offloading strategies across the hierarchy (client $\rightarrow$ edge $\rightarrow$ cloud).

\begin{algorithm}[t]
\SetAlgoNoEnd 
\DontPrintSemicolon
\footnotesize
  \KwIn{%
    Model pool $\mathcal{M}$, tolerance $\delta$; maximum iterations $N_{\max}$.}
  \KwOut{%
    Onloading decisions $\mathbf{x}^{\star},\mathbf{z}^{\star}$ and offloading rates $\mathbf{o}^{\star}$.}
  \textbf{Initialization:} 
  Greedily set $\mathbf{o}^{c, (0)}$ proportional to client task loads, under constraints; $\mathbf{o}^{c,e,(0)}\!\leftarrow\!\mathbf{0}$;\!
  $F(\mathbf{o}^{(0)};\mathbf{x}^{(0)}, \!\mathbf{z}^{(0)})\!=\!F_{\mathrm{best}}\!\!\leftarrow\!-\!\infty$.\;
  \For(\tcp*[f]{outer AO loop}){$k=1,\dots,N_{\max}$}{
    \textcolor{blue}{\textbf{/*  Greedy Lagrangian onloading  */}}\;
    Fix $\mathbf{o}^{(k-1)}$ and $\bigl(\mathbf{x}^{\mathrm{new}},\mathbf{z}^{\mathrm{new}}\bigr)\leftarrow$\textsc{Greedy-LR}$\,(\mathbf{o}^{(k-1)})$\;
    \If{$F(\mathbf{x}^{\mathrm{new}},\mathbf{z}^{\mathrm{new}};\mathbf{o}^{(k-1)})
         \hspace{-0.03in} > \hspace{-0.03in}F(\mathbf{o}^{(k-1)};\mathbf{x}^{(k-1)},\mathbf{z}^{(k-1)})$}{
         $\mathbf{x}^{(k)}\!\leftarrow\!\mathbf{x}^{\mathrm{new}},\;\mathbf{z}^{(k)}\!\leftarrow\!\mathbf{z}^{\mathrm{new}}$\;
    }\Else{
         $\mathbf{x}^{(k)}\!\leftarrow\!\mathbf{x}^{(k-1)},\;\mathbf{z}^{(k)}\!\leftarrow\!\mathbf{z}^{(k-1)}$\;
    }
    \textcolor{blue}{\textbf{/*  LP-based offloading  */}}\;
    Solve the linear program with fixed
    $\mathbf{x}^{(k)},\mathbf{z}^{(k)}$ to obtain $\mathbf{o}^{(k)}$.\;
    \If{$F\bigl(\mathbf{o}^{(k)};\mathbf{x}^{(k)},\mathbf{z}^{(k)}\bigr)-F_{\mathrm{best}}<\delta$}{
        \textbf{break}
    }
    $F_{\mathrm{best}}\leftarrow F\bigl(\mathbf{o}^{(k)};\mathbf{x}^{(k)},\mathbf{z}^{(k)}\bigr)$\;
  }
  \textbf{return}  $(\mathbf{x}^{\star}, \mathbf{z}^{\star}, \mathbf{o}^{\star}) \leftarrow(\mathbf{x}^{(k)},\mathbf{z}^{(k)}, \mathbf{o}^{(k)})$.
\caption{\textbf{J3O} Algorithm}
\label{alg:AO}
\end{algorithm}
\setlength{\textfloatsep}{5pt}

\begin{algorithm}[t]
\footnotesize
\SetAlgoNoEnd 
\DontPrintSemicolon
\KwIn{Fixed offloading rates $\mathbf{o}$; node sets
$\mathcal{V}=\mathcal{C}\cup\mathcal{E}$; tolerance $\varepsilon$.}
\KwOut{Binary onloading variables $(\mathbf{x},\mathbf{z})$}
\ForEach{$v\in\mathcal{V}$}{
  $\alpha^v\!\leftarrow\!0$,\,\,$\mathcal{M}^v_{(0)}\!\leftarrow\!\emptyset$ \tcp*{model set at node $v$}
}
\For(\tcp*[f]{Subgradient iteration}){$\ell = 1$ \KwTo $L_{\max}$}{
  \ForEach{$v\in\mathcal{V}$}{
    \While(\tcp*[f]{Greedy set selection}){True}{$m^\star\!\leftarrow\!\text{argmax}_{m\in\mathcal{M}\setminus\mathcal{M}^v_{(\ell)}}
          {\Delta \mathcal{L}^v (m;\mathcal{M}^v_{(\ell)}, \mathbf{o})}/{s_m}$\;
        \If{\textnormal{adding $m^\star$ keeps memory budget }$\mu^v$ feasible}{
            $\mathcal{M}^v_{(\ell)} \!\leftarrow\!
               \mathcal{M}^v_{(\ell)} \cup \{m^\star\}$\;
        }\Else{\textbf{break}}
    }
    Construct $(x^v, z^v)$ from ${\mathcal M^v_{(\ell)}}$\\ 
    $\alpha^v \leftarrow \bigl[\alpha^v + \eta_\ell \cdot\emph{viol}_v(\mathcal M^v_{(\ell)})\bigr]^+$ in \eqref{eq:dual-update}\\
    \If{$\emph{viol}_v(\mathcal M^v_{(\ell)})<\varepsilon$}{\textbf{break}}
  }

}
\Return{$(\mathbf{x},\mathbf{z})$}
\caption{\textsc{Greedy--LR}$(\mathbf{o})$}
\label{alg:GreedyLR}
\end{algorithm}

\subsubsection{Full Algorithm and Complexity Analysis}

Algorithm~\ref{alg:AO} summarizes the proposed J3O procedure. To ensure convergence and stability, we adopt a conservative update rule: the onloading decision at iteration $k$ is accepted only if it improves the overall objective relative to iteration $k-1$; otherwise, the previous decision is retained. This ensures the objective is non-decreasing over iterations, i.e.,
$F(\mathbf{x}^{(k)}, \mathbf{z}^{(k)}, \mathbf{o}^{(k)}) \geq F(\mathbf{x}^{(k-1)}, \mathbf{z}^{(k-1)}, \mathbf{o}^{(k-1)}), \; \forall k.$ The algorithm terminates once the improvement falls below a predefined threshold $\delta$, or a maximum number of iterations is reached. The final output is a near-optimal joint onloading–offloading configuration that balances accuracy and resource usage.

We analyze the runtime of J3O by focusing on the model onloading subproblem, as the LP-based offloading step is efficiently solvable. Let $K$ be the number of subgradient updates to the Lagrange multipliers. In each update, the greedy routine for each node $c \in \mathcal{C}$, $e \in \mathcal{E}$ evaluates the marginal gain of all $M$ candidate models at most $B^c$, $B^e$ times, respectively, bounded by memory budgets. Summing across nodes, the per-update cost is $\mathcal{O}\bigl(M(\sum_{c} B^c + \sum_{e} B^e)\bigr)$. Thus, one outer iteration has total cost $\mathcal{O}\bigl(KM(\sum_{c} B^c + \sum_{e} B^e)\bigr)$, which is linear in the number of models and memory budgets.

\vspace{-0.02in}
\subsection{Optimality Guarantee of Algorithm}
To establish a theoretical guarantee, we first analyze a simplified two-tier setting with clients and a central server; the extension to three tiers follows analogously.

\paragraph{Optimality Guarantee for Two-Level System}
We analyze the solution \((x^n, o^n)\) returned at iteration $n$ of our alternating optimization scheme for a two-level system comprising clients and a central server. The system objective is given by
\vspace{-0.05in}
\begin{equation}
F(x, o) = 1 - {\sum\nolimits}_{t} \,\,\bar\lambda_t (1 - o_t)(1 - A_t(x)),
\vspace{-0.01 in}
\end{equation}
where \(x\!\in\! \mathcal{X}\) denotes the binary model onloading decision under memory, and \(o\!\in\!\mathcal{O}(x)\) specifies the offloading rates subject to compute and communication budgets. The global optimum is denoted $(x^*, o^*)$.

\vspace{-0.03in}
\begin{assumption}[Monotone Onloading]
\label{ass:stop}
At each iteration $k=1, \dots, n$, let $x'$ be the onloading solution proposed by the greedy subroutine (given fixed $o^{k-1}$). The update is accepted only if it improves or maintains the objective, i.e., $F(x', o^{k-1}) - F(x^{k-1}, o^{k-1}) \ge 0$. Otherwise, the previous onloading decision is retained, i.e., $x^k = x^{k-1}$.
\end{assumption}

\begin{assumption}[Bounded Offloading Gap]
\label{ass:warmstart}
Let \(o^l\) denote the offloading solution obtained at iteration $l$ after convergence, and suppose $\|o^l - o^*\|_\infty \le \epsilon$.
Then the resulting objective is within \(\epsilon\) of the global optimum, which follows from
\begin{equation}
\begin{aligned}
F(x^*, o^*)- F(x^*, o^l)
& = {\sum\nolimits}_t \, \bar\lambda_t (o_t^* - o_t^l)(1 - A_t(x^*)) \\
& \le {\sum\nolimits}_t \, \bar\lambda_t \cdot |o_t^l - o_t^*| \\
& \le \max_t|o_t^l - o_t^*| \le \epsilon.
\end{aligned}
\end{equation}
\end{assumption}
\begin{lemma}[LP-Step Monotonicity]
\label{lem:lp}
Given a fixed onloading decision $x$, the offloading step, solved as an LP, yields a non-decreasing objective across iterations $k$. That is, $F(x, o^k) \ge F(x, o^{k-1})$, where $o^k = \arg\max_{o \in \mathcal{O}(x)} F(x, o)$.
\end{lemma}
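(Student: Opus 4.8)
\textbf{Proof plan for Lemma~\ref{lem:lp}.}
The plan is to show that the LP-based offloading step never decreases the objective by exhibiting a feasible point of the iteration-$k$ subproblem whose objective value equals $F(x, o^{k-1})$, and then invoking optimality of $o^k$. First I would observe that $o^k$ is, by definition, a maximizer of $F(x, \cdot)$ over the feasible set $\mathcal{O}(x)$ induced by the fixed onloading decision $x$; this is exactly the LP solved in the offloading subproblem of Section~\ref{sec:onload-offload} (equations~\eqref{eq:offload-obj} and the constraints \eqref{const:comp-c}--\eqref{const:comm-e}, \eqref{const:o_t_ce_assign}, \eqref{const:o_range}). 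So it suffices to argue that $o^{k-1}$ itself lies in $\mathcal{O}(x)$.

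The key step is the feasibility claim: the offloading vector $o^{k-1}$ from the previous iteration remains feasible for the LP at iteration $k$. This holds because the feasible region of the offloading LP — memory-free by construction — depends on the onloading variables only through the compute constraints \eqref{const:comp-c}--\eqref{const:comp-e}, which are the same $x$ used in both the onloading acceptance check and the LP (recall from Algorithm~\ref{alg:AO} that within a single outer iteration $x^{(k)}$ is first fixed, and then the LP is solved with that same $x^{(k)}$; and $o^{k-1}$ was produced by the LP at iteration $k-1$ with onloading $x^{(k-1)}$, and by Assumption~\ref{ass:stop} the accepted $x^{(k)}$ either equals $x^{(k-1)}$ or was only accepted after re-checking feasibility). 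I would therefore split into the two cases of the monotone-onloading rule: if $x^{(k)} = x^{(k-1)}$, then $\mathcal{O}(x^{(k)}) = \mathcal{O}(x^{(k-1)}) \ni o^{k-1}$ trivially; if $x^{(k)} = x^{\mathrm{new}} \neq x^{(k-1)}$, then feasibility of $o^{k-1}$ under the new compute budgets is what the greedy/Lagrangian step's near-feasibility guarantee (and the conservative acceptance rule) ensures, so $o^{k-1} \in \mathcal{O}(x^{(k)})$ as well. Given feasibility, optimality of $o^k$ yields $F(x, o^k) = \max_{o \in \mathcal{O}(x)} F(x, o) \ge F(x, o^{k-1})$, which is the claim.

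The main obstacle I anticipate is the feasibility-carryover argument in the case $x^{(k)} \neq x^{(k-1)}$: changing the onloaded model set changes the per-query compute costs $w_m$ that appear in \eqref{const:comp-c}--\eqref{const:comp-e} via the active $z$-assignment, so $o^{k-1}$ need not be exactly feasible for the new compute constraints unless one is careful about how the acceptance test in Assumption~\ref{ass:stop} is phrased. The cleanest resolution is to note that the objective comparison $F(x^{\mathrm{new}}, o^{(k-1)}) \ge F(x^{(k-1)}, o^{(k-1)})$ in the acceptance rule is evaluated at the \emph{same} offloading vector $o^{(k-1)}$, so whenever $x^{\mathrm{new}}$ is accepted, $o^{(k-1)}$ is by construction a point at which the iteration-$k$ objective is evaluated; combined with the fact that the LP constraints are only tightened or relaxed through compute terms that the greedy step is driven to satisfy (via the subgradient updates \eqref{eq:dual-update}), $o^{(k-1)}$ remains in $\mathcal{O}(x^{(k)})$ up to the same tolerance used elsewhere in the analysis. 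One then states the lemma as an exact inequality under the idealization that the offloading feasible set is monotone in the accepted onloading decision, which is consistent with the two-tier simplification used in the surrounding optimality analysis.
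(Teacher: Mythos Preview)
Your proposal is correct and follows exactly the paper's argument: assert $o^{k-1}\in\mathcal{O}(x)$ and invoke optimality of $o^k$ over $\mathcal{O}(x)$ to conclude $F(x,o^k)\ge F(x,o^{k-1})$. The paper's proof is literally that one line; your extended case analysis on whether $o^{k-1}$ remains feasible after $x^{(k-1)}\to x^{(k)}$ is a legitimate concern, but it goes beyond what the lemma (read literally, with a single fixed $x$) asserts and beyond what the paper's own proof supplies---the paper simply takes $o^{k-1}\in\mathcal{O}(x)$ as given.
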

\begin{proof}
By definition of the maximizer, $o^{k-1} \in \mathcal{O}(x)$ and $o^k$ is the optimal solution over the feasible set $\mathcal{O}(x)$. Therefore, $F(x, o^k) = \max_{o \in \mathcal{O}(x)} F(x, o) \ge F(x, o^{k-1})$.
\end{proof}

\begin{lemma}[Greedy Onloading with Lagrangian Relaxation]
\label{lem:greedy}
Suppose the onloading problem is solved via greedy submodular maximization under relaxed constraints. For fixed $o^k$, let $\alpha^* \ge 0$ be the Lagrange multiplier such that the greedy solution $x^{k+1}$ satisfies the relaxed constraint. Then, $F(x^{k+1}, o^k) \ge (1 - 1/e) \max_{x \in \mathcal{X}} F(x, o^k)$.
\end{lemma}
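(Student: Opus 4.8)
The plan is to reduce Lemma~\ref{lem:greedy} to the classical $(1-1/e)$ guarantee for greedy submodular maximization under a single knapsack constraint, and then to transfer that guarantee from the Lagrangian-penalized objective back to $F(\cdot,o^k)$ via weak duality together with (approximate) complementary slackness at $\alpha^*$. The overall skeleton is: (i) show $\mathcal L(\mathbf x,\mathbf z;\alpha)$ is monotone submodular in the model sets for every fixed $\alpha\ge 0$; (ii) apply the greedy guarantee under the remaining memory knapsack; (iii) use weak duality to lower-bound $\max_{\mathrm{mem}}\mathcal L(\cdot;\alpha^*)$ by $\max_{x\in\mathcal X}F(x,o^k)$; and (iv) use the stopping rule for the multiplier $\alpha^*$ to identify $F(x^{k+1},o^k)$ with $\mathcal L(\mathbf x^{k+1},\mathbf z^{k+1};\alpha^*)$.

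For (i) and (ii): fix $o^k$ and any $\alpha\ge 0$. As in the argument following Proposition~1, the best-model assignment reduces the penalized accuracy at each node to $\sum_t\bar\lambda_t(1-o_t^k)\max_{m\in\mathcal M^v}\bigl(a_{m,t}-\alpha w_m\bigr)$, a weighted coverage function of the selected model set, hence monotone and submodular; summing over the (decoupled) nodes shows $\mathcal L(\mathbf x,\mathbf z;\alpha)$ is, up to an additive constant, monotone submodular over the model sets, and once the compute constraint is absorbed into the penalty the only remaining constraint is the linear memory budget \eqref{const:mem}. Running the cost-benefit greedy of Algorithm~\ref{alg:GreedyLR} (with the partial-enumeration refinement of \cite{sviridenko2004note,iyer2013submodular} that is needed for the full factor) on this objective under the memory knapsack then gives, at the multiplier $\alpha^*$ at which the subgradient loop terminates, $\mathcal L(\mathbf x^{k+1},\mathbf z^{k+1};\alpha^*)\ge(1-1/e)\max_{(\mathbf x,\mathbf z)\ \mathrm{mem\text{-}feasible}}\mathcal L(\mathbf x,\mathbf z;\alpha^*)$.

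For (iii) and (iv): for every $(\mathbf x,\mathbf z)$ feasible for the compute constraint the penalty term is nonnegative, so $\mathcal L(\mathbf x,\mathbf z;\alpha^*)\ge F(x,o^k)$, and maximizing over the larger memory-feasible set gives $\max_{\mathrm{mem\text{-}feasible}}\mathcal L(\cdot;\alpha^*)\ge\max_{x\in\mathcal X}F(x,o^k)$. At the same time, the subgradient updates \eqref{eq:dual-update} stop only once the compute violation of $\mathbf x^{k+1}$ is below tolerance, i.e.\ either $\alpha^*=0$ or the constraint is (essentially) tight, so the penalty term at $(\mathbf x^{k+1},\mathbf z^{k+1})$ vanishes and $F(x^{k+1},o^k)=\mathcal L(\mathbf x^{k+1},\mathbf z^{k+1};\alpha^*)$. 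Chaining these three relations with the greedy bound yields $F(x^{k+1},o^k)\ge(1-1/e)\max_{x\in\mathcal X}F(x,o^k)$, as claimed.

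The main obstacle is the complementary-slackness step: on an integer inner problem the subgradient method does not certify a zero duality gap, so the identity $F(x^{k+1},o^k)=\mathcal L(\mathbf x^{k+1},\mathbf z^{k+1};\alpha^*)$ holds exactly only when $\alpha^*$ restores feasibility, and otherwise up to an $\mathcal O(\varepsilon)$ term governed by the residual compute violation; the clean bound should be read in that idealized sense (or with an explicit additive $\mathcal O(\varepsilon)$ slack). A secondary caveat is that the plain cost-benefit greedy only guarantees $\tfrac12(1-1/e)$, so the stated factor requires the partial-enumeration variant of \cite{sviridenko2004note}, which I would either invoke explicitly or fold into the lemma's hypotheses.
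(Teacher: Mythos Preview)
Your proposal is correct and follows essentially the same route as the paper: apply the $(1-1/e)$ greedy bound to the Lagrangian under the remaining memory knapsack, invoke weak duality to replace $\max_{\mathrm{mem}}\mathcal L(\cdot;\alpha^*)$ by $\max_{x\in\mathcal X}F(x,o^k)$, and then use (approximate) complementary slackness at $\alpha^*$ to identify $F(x^{k+1},o^k)$ with $\mathcal L(x^{k+1},o^k;\alpha^*)$ up to the residual $\alpha^*\Delta_k$, which is declared negligible. You are in fact more careful than the paper on two points it glosses over: that the plain cost-benefit greedy only yields $\tfrac12(1-1/e)$ and the full factor requires the partial-enumeration variant of \cite{sviridenko2004note}, and that the complementary-slackness identity holds only up to the $\mathcal O(\varepsilon)$ tolerance of the subgradient loop rather than exactly.
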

\begin{proof}
Let $\Delta_k := \beta - \phi(x^{k+1}, o^k) \ge 0$ be the slack in the relaxed constraint. Define the Lagrangian as $\mathcal{L}(x, o; \alpha) = F(x, o) - \alpha[\phi(x, o) - \beta]$. The greedy algorithm ensures
\begin{align}
\mathcal{L}(x^{k+1}, o^k; \alpha^*) \ge (1 - 1/e) \max_{x \in \mathcal{X}} \mathcal{L}(x, o^k; \alpha^*), \\
\max_{x \in \mathcal{X}}\,\mathcal{L}(x, o^k; \alpha^*) \ge\max_{x \in \mathcal{X}} F(x, o^k).
\end{align}
Since $F(x^{k+1}, o^k) = \mathcal L(x^{k+1}, o^k; \alpha^*) -\alpha^* \Delta_k$, we conclude
\begin{equation}
F(x^{k+1}, o^k) \ge (1 - 1/e) \max_{x \in \mathcal{X}} F(x, o^k) - \alpha^* \Delta_k.
\vspace{-0.03 in}
\end{equation}
Finally, the dual update $\alpha \leftarrow \max\{0, \alpha - \eta \Delta_k\}$ ensures $|\alpha^* \Delta_k| \le \varepsilon \approx 10^{-5}$, making the additive loss negligible.
\end{proof}

\begin{theorem}[Final Guarantee]
Under Assumptions~\ref{ass:stop} and~\ref{ass:warmstart}, the final iterate \((x^n, o^n)\) returned by the alternating optimization satisfies 
$F(x^n, o^n) \ge (1 - 1/e) [F(x^*, o^*) - \epsilon].$
\end{theorem}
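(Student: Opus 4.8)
The plan is to obtain the bound by chaining the three ingredients already in hand — Lemma~\ref{lem:lp} (an LP offloading step never decreases the objective), Lemma~\ref{lem:greedy} (one greedy onloading step is a $(1-1/e)$-approximation for the offloading vector it is run against, up to the negligible $\alpha^*\Delta_k$ slack shown there), and Assumption~\ref{ass:warmstart} (after convergence the offloading iterate is within $\epsilon$ of $o^*$ in $\ell_\infty$, hence at most $\epsilon$-suboptimal for any fixed onloading) — while being careful about the conditional acceptance rule of Assumption~\ref{ass:stop}. Let $x'$ denote the raw output of \textsc{Greedy-LR}$(o^{n-1})$ at the final iteration $n$, i.e. before the accept/reject comparison in the algorithm.

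The first step is to note that the accepted onloading always dominates the greedy proposal at the offloading vector it was computed against: $F(x^n, o^{n-1}) \ge F(x', o^{n-1})$. If $x'$ is accepted this holds with equality ($x^n=x'$); if it is rejected, then $x^n=x^{n-1}$ and the rejection criterion of Assumption~\ref{ass:stop} reads exactly $F(x^{n-1},o^{n-1}) \ge F(x', o^{n-1})$ (equivalently, this is monotonicity of the value sequence $\{F(x^k,o^k)\}_k$). Applying Lemma~\ref{lem:greedy} to $x'$ gives $F(x', o^{n-1}) \ge (1-1/e)\max_{x\in\mathcal X} F(x, o^{n-1})$, and since the global optimizer $x^*$ is memory-feasible, i.e. $x^*\in\mathcal X$, the right-hand side is at least $(1-1/e)\,F(x^*, o^{n-1})$. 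Finally Lemma~\ref{lem:lp} absorbs the last LP update, $F(x^n,o^n)\ge F(x^n,o^{n-1})$. Stacking these inequalities yields $F(x^n, o^n) \ge (1-1/e)\,F(x^*, o^{n-1})$.

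It then remains to trade $o^{n-1}$ for $o^*$. By Assumption~\ref{ass:warmstart}, at convergence $\|o^{n-1}-o^*\|_\infty\le\epsilon$, and the displayed estimate there — which uses only $\sum_t\bar\lambda_t\le 1$, $0\le A_t(x^*)\le 1$, and $(o_t^*-o_t^{n-1})(1-A_t(x^*))\le|o_t^*-o_t^{n-1}|$ — gives $F(x^*,o^{n-1})\ge F(x^*,o^*)-\epsilon$. Since $1-1/e>0$, multiplying preserves the inequality and we conclude $F(x^n,o^n)\ge(1-1/e)[F(x^*,o^*)-\epsilon]$. The three-tier case is identical in structure: the objective in \eqref{eq:main_obj} is additive over the client, edge, and cloud terms, the onloading subproblem remains a monotone submodular maximization at each node (so Lemma~\ref{lem:greedy} applies node-wise, with the memory constraint hard and the compute constraint Lagrangian-relaxed), and the offloading step is still an LP (so Lemma~\ref{lem:lp} and Assumption~\ref{ass:warmstart} carry over).

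I expect the main obstacle to be the bookkeeping forced by the conditional acceptance rule: because Assumption~\ref{ass:stop} may discard the greedy proposal, one cannot assert that $x^n$ is itself a $(1-1/e)$-approximate onloading for $o^{n-1}$, so the argument must be routed through $x'$ (or, equivalently, through sequence monotonicity) exactly as above, and one must also pin down at which iteration the greedy subroutine is invoked relative to the break condition. A secondary point worth stating explicitly is that $F(x^*, o^{n-1})$ is used even though $o^{n-1}$ need not lie in $\mathcal{O}(x^*)$; this is legitimate only because $F$ is treated throughout — including in Assumption~\ref{ass:warmstart} — as a plain function of $(x,o)$, so the inequality $\max_{x\in\mathcal X}F(x,o^{n-1})\ge F(x^*,o^{n-1})$ should not be mistaken for a statement about the constrained offloading feasibility set.
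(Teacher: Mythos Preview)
Your proposal is correct and follows essentially the same argument as the paper: chain monotonicity of the iterates (Lemma~\ref{lem:lp} and Assumption~\ref{ass:stop}) down to a greedy step evaluated at a converged offloading vector, apply the $(1-1/e)$-guarantee of Lemma~\ref{lem:greedy} there, and then invoke Assumption~\ref{ass:warmstart} to swap that offloading vector for $o^*$ at an $\epsilon$ cost. If anything, your handling of the accept/reject rule --- routing the inequality through the raw greedy output $x'$ rather than asserting Lemma~\ref{lem:greedy} directly for the accepted iterate --- is more careful than the paper's own proof, which applies Lemma~\ref{lem:greedy} to $x^{l+1}$ without explicitly addressing the case where the proposal was rejected.
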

\vspace{-0.1in}
\begin{proof}
By Lemma~\ref{lem:lp}, the objective is non-decreasing over LP steps, $F(x^n, o^n) \ge F(x^n, o^{n-1}) \ge \cdots \ge F(x^{l+1}, o^l).$
From Lemma~\ref{lem:greedy} and Assumption~\ref{ass:warmstart}, we have
\begin{equation}
\begin{aligned}
F(x^{l+1}, o^l) &\ge (1 - 1/e)
\max_{x} F(x, o^l) \\ &\ge (1 - 1/e) F(x^*, o^l)
 \\ &\ge (1 - 1/e)[F(x^*, o^*)-\epsilon],
\end{aligned}
\end{equation}
which completes the proof.
\end{proof}
\vspace{-0.1in}
For a system with clients, edges, and a central cloud with oracle accuracy, the offloading gap in Assumption~\ref{ass:warmstart} becomes
\begin{equation}
\begin{aligned}
    F(x^*, &o^{*})-F(x^{*}, o^k)\\&={\sum\nolimits}_t\, \bar\lambda_t^c(A_t^c(x^*)-A_t^e(x^*))(o_t^{c,k}-o_t^{c, *}) \\
    &+ {\sum\nolimits}_t \,\bar\lambda_t^c(1-A_t^e(x^*))(o_t^{c, e, k}-o_t^{c, e, *}).
\end{aligned}
\vspace{-0.03in}
\end{equation}
This shows that the total objective difference is again bounded by deviations in the offloading variables $o^c$ and $o^{c,e}$.

\section{Batching-Aware Extension}
\subsection{Batching Model for Edge Inference}

To capture the operational characteristics of edge accelerators, we extend our framework to incorporate batching, amortizing the per-query inference cost and introducing latency constraints tied to batch formation and execution. Each edge server collects incoming queries over a $T_b$
forming homogeneous batches, each with single-task queries. This design mirrors typical execution patterns in multi-task models, where inputs in the same batch are routed to shared task-specific components (e.g., classifier heads).

The resulting mean batch size for task $t$ at edge server $e$ is $b_t^e\!=\!\lambda_t^e T_b$. Empirical studies show batch execution latency grows linearly with batch size \cite{inoue2021queueing, cang2024joint}. We therefore model the total processing latency for task $t$ at the edge $e$ as
\vspace{-0.02in}
\begin{equation}
  \tau_t^e(b_t^e) =
      \sum_{m\in\mathcal M}\,
      \bigl(\nu_m^e \mathbbm{1}_{b_t^e} + \tau_m^e\,b_t^e\bigr)
      x_m^e z_{m,t}^e,
  \quad
  \tau_m^e = \frac{w_m}{\beta^e},
\vspace{-0.02in}
\end{equation}
where $\nu_m^e$ denotes the model-specific batch setup cost, $w_m$ is the per-query compute cost, and $\beta^e$ is the compute capacity of edge server $e$. Here, batching reduces per-query latency by amortizing the fixed $\nu_m^e$ over multiple queries within a batch. 

To ensure that on average, each batch initiated within an interval is executed before the subsequent interval begins, we impose a per-edge batching-latency constraint:
\vspace{-0.02in}
\begin{equation}
  \sum_{t\in\mathcal T}\sum_{m \in \mathcal M}\bigl(\nu_m^e \mathbbm{1}_{\lambda_t^e}+\frac{w_m}{\beta^e}\,\lambda_t^e T_b\bigr) x_m^e z_{m,t}^e
  \le T_b, \, \forall e, 
  \label{const:edge-capacity}
  \vspace{-0.03in}
\end{equation}
ensuring that the cumulative execution time of all task-specific batches launched on edge $e$ does not exceed the batching interval. This provides a conservative guarantee that the processing delay for each request does not exceed $2T_b$.

\subsection{Batching-Aware Joint Optimization}
We extend the J3O framework to incorporate batching-related latency constraints, resulting in the \textit{\textbf{B}atching-\textbf{A}ware \textbf{J}oint \textbf{O}ptimization of \textbf{O}nloading, \textbf{O}ffloading} (\textbf{BAJ3O}) algorithm.  A core challenge in this setting arises from the nonlinearity introduced by the indicator term $\mathbbm{1}_{\lambda_t^e}$, which captures whether task $t$ receives queries at edge $e$. This term is equivalent to the $\ell_0$-norm, $\|\lambda_t^e\|_0$, making the constraint non-convex. Following the surrogate approximation approach in \cite{cang2024joint}, we linearize this term using a first-order Taylor expansion, yielding the relaxed constraint
\vspace{-0.02in}
\begin{equation}
\!\!\!\!\sum_{t\in\mathcal T}\!\sum_{m \in \mathcal M}\!\bigg(\! \nu_m^e (\theta_t^e \lambda_t^e + \psi_t^e)\!+\!\frac{w_m}{\beta^e} \lambda_t^e T_b\!\bigg) x_m^e z_{m,t}^e \!\le\!T_b, \,\forall e, 
\label{const:relaxed-capacity}
\end{equation}
where $\theta_t^e$ and $\psi_t^e$ are iteration--specific coefficients derived from the surrogate function and updated to refine the approximation. Further details can be found in Section IV of \cite{cang2024joint}.

Leveraging this relaxation, BAJ3O performs alternating optimization over model onloading and offloading in the presence of batching. At each iteration, the batching constraint is first linearized using the current estimates of the surrogate coefficients $\theta_t^e$ and $\psi_t^e$. We then solve the onloading and offloading subproblems following the same procedure as in the original J3O algorithm, with the key difference that the edge compute constraint is now replaced by the batching-latency constraint. Once the model selections and offloading rates are updated, the surrogate coefficients are recalculated based on the updated effective task arrival rates $(\lambda_t^e)_{t\in\mathcal T, e\in\mathcal E}$,  reflecting the new batch sizes. This alternating process (consisting of constraint linearization, subproblem optimization, and surrogate update), repeats until convergance. 

\section{Simulation Results}

\begin{table}[t]
  \caption{Models and input statistics for each benchmark.}
  \label{tab:model_stats_combined}
  \centering
  \begin{subtable}[t]{\linewidth}
    \centering
    \scriptsize
    \vspace{-0.05in}
    \begin{tabular}{lcccc}
      \bottomrule
      \textbf{Model} & \textbf{\# Param} & \textbf{Mem (MB)} & \textbf{GFLOPs} 
      & \textbf{Input Mem (MB)}\\
      \hline
      Xception     & 18.41M & 73.66  & 9.17 & 0.79 \\
      ResNet34     & 21.80M & 83.15  & 3.68 & 0.60 \\
      \toprule
    \end{tabular}
    \caption{Taskonomy and DomainNet.}
  \end{subtable}
  \begin{subtable}[t]{\linewidth}
    \centering
    \vspace{-0.07in}
    \begin{adjustbox}{width=\linewidth}
    \begin{tabular}{lccccccc}
      \bottomrule
      \textbf{Metric} & $m_1$ & $m_2$ & $m_3$ & $m_{\{1,2\}}$ & $m_{\{1,3\}}$ & $m_{\{2,3\}}$ & $m_{\{1,2,3\}}$ \\
      \hline
      Mem (GB)-FP32  
        & 1.10 & 1.10 & 1.14 & 1.18 & 1.22 & 1.22 & 1.30\\
        Mem (GB)-INT8 & 0.80 & 0.80 & 0.84 & 0.88 & 0.91 & 0.91 & 0.99 \\
      TFLOPs &  1.13 & 1.13 & 0.73 & 1.84 & 1.44 & 1.44 & 2.16\\
      Input Mem (MB) & \multicolumn{7}{c}{25.17 (shared across models)}\\
      \toprule
    \end{tabular}
    \end{adjustbox}
    \caption{Cityscape3D. Numbers in braces denote supported tasks.}
  \end{subtable}
  \vspace{-0.13in}
\end{table}


\begin{table}[t]
  \caption{Specifications of client (C1–C5) and edge (E1–E3) devices across benchmarks. Task = Taskonomy, Dom = DomainNet, City = Cityscape3D.}
  \vspace{-0.05in}
\label{tab:device_config_both}
  \centering
  \footnotesize
  \begin{adjustbox}{width=\linewidth}
  \begin{tabular}{ll|ccccc|ccc}
    \bottomrule
    \textbf{Dataset} & \textbf{Metric} & \textbf{C1} & \textbf{C2} & \textbf{C3} & \textbf{C4} & \textbf{C5} & \textbf{E1} & \textbf{E2} & \textbf{E3} \\
    \hline
    \multirow{2}{*}{Task/Dom} 
      & Mem (MB)  & 24 & 24 & 48 & 48 & 96 & 512 & 512 & 1024 \\
    & GFLOPs    & 0.5K & 1.0K & 1.0K & 2.0K & 2.0K & 10.0K & 12.0K & 15.0K \\
    \hline
    \multirow{2}{*}{City}  & Mem (GB)  & 1.0 & 2.0 & 2.0 & 3.0 & 4.0 & 5.0 & 6.0 & 6.0 \\
      & TFLOPs    & 10.0 & 10.0 & 15.0 & 15.0 & 20.0 & 30.0 & 40.0 & 50.0 \\
    \toprule
  \end{tabular}
\end{adjustbox}
\end{table}

\subsection{Experimental Setup}
\subsubsection{Datasets and Models} 

\begin{figure*}[t]
 \centering
 \begin{minipage}[b]{0.324\textwidth}
   \centering
   \includegraphics[width=\linewidth]{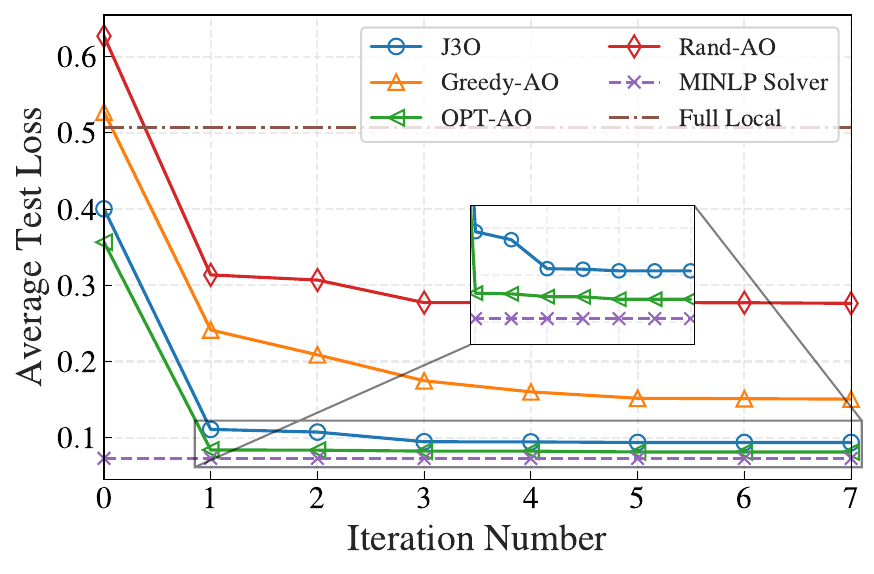}
   \vspace{-0.26in}
   \caption{Convergence of the algorithm.}
   \label{fig:converge}
 \end{minipage}  
 \begin{minipage}[b]{0.324\textwidth}
   \centering
   \includegraphics[width=\linewidth]{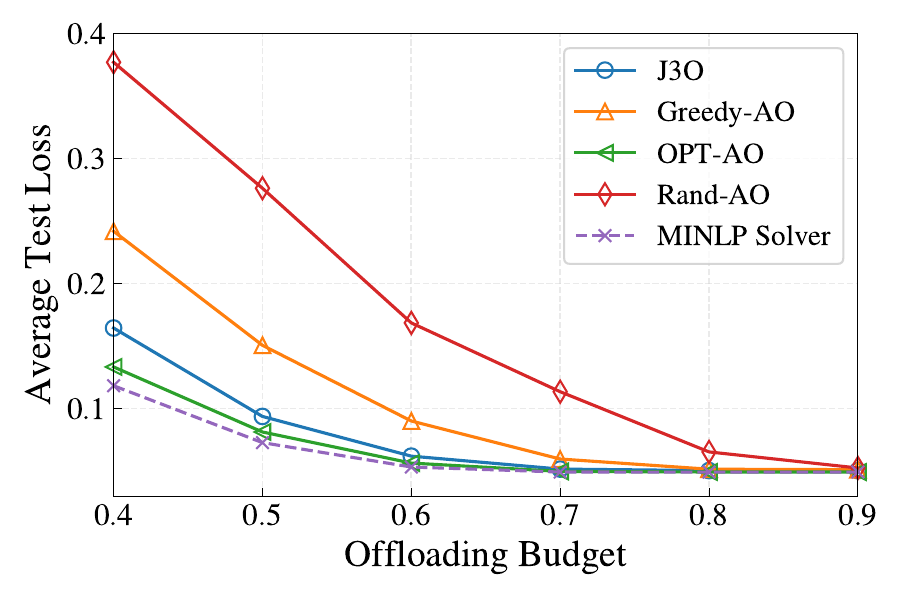}
   \vspace{-0.26in}
   \caption{Varying network capacity with $\chi_\kappa^e$.}
   \label{fig:network}
 \end{minipage}
 \begin{minipage}[b]{0.324\textwidth}
   \centering
   \includegraphics[width=\linewidth]{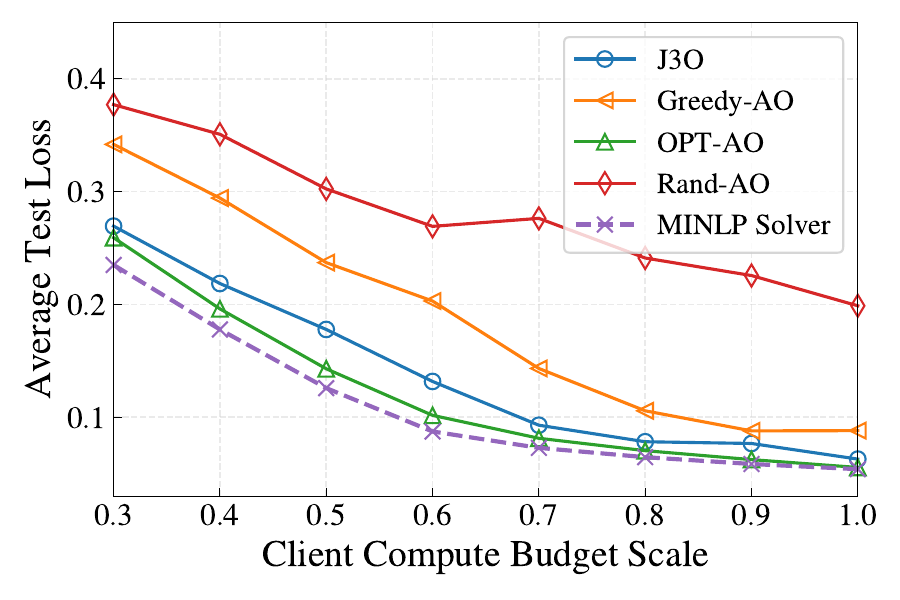}
   \vspace{-0.26in}
   \caption{Varying client compute budget.}
   \label{fig:compute}
 \end{minipage}
\end{figure*}


\begin{table*}[t]
\centering
\small
\vspace{-0.03in}
\caption{System performance (accuracy/loss) and runtime (ms), averaged over 20 seeds (±std).  Taskonomy/DomainNet use 31 models; CityScape3D uses 7.}
\vspace{-0.05 in}
\begin{tabular}{l|cc|cc|cc}
\bottomrule
\multirow{2}{*}{Method} &
\multicolumn{2}{c|}{\textbf{Taskonomy‑5}} &
\multicolumn{2}{c|}{\textbf{DomainNet‑6}} &
\multicolumn{2}{c}{\textbf{Cityscape3D‑3}}\\
& Avg.\ Loss ($\textcolor{red}{\boldsymbol\downarrow}$) & Runtime (ms) &
Avg.\ Acc ($\textcolor{blue}{\boldsymbol\uparrow}$) & Runtime (ms) &
Avg.\ Loss ($\textcolor{red}{\boldsymbol\downarrow}$) & Runtime (ms) \\
\hline
Greedy‑AO
& 0.151$_{\pm\text{0.081}}$
& 574.82$_{\pm\text{10.44}}$
& 0.735$_{\pm\text{0.025}}$
& 552.27$_{\pm\text{10.74}}$
& 0.164$_{\pm\text{0.014}}$
& 98.73$_{\pm\text{5.38}}$ \\

Optimal‑AO
& 0.081$_{\pm\text{0.047}}$
& 2078.33$_{\pm\text{369.35}}$
& 0.770$_{\pm\text{0.015}}$
& 2015.34$_{\pm\text{304.71}}$
& 0.163$_{\pm\text{0.010}}$
& 216.62$_{\pm\text{21.20}}$ \\

Rand‑AO
& 0.276$_{\pm\text{0.073}}$
& 562.67$_{\pm\text{10.53}}$
& 0.640$_{\pm\text{0.054}}$
& 537.89$_{\pm\text{13.92}}$
& 0.221$_{\pm\text{0.044}}$
& 95.81$_{\pm\text{5.16}}$ \\

Full‑Local
& 0.507$_{\pm\text{0.074}}$
& 633.71$_{\pm\text{244.23}}$
& 0.397$_{\pm\text{0.037}}$
& 597.49$_{\pm\text{237.37}}$
& 0.439$_{\pm\text{0.098}}$
& 40.22$_{\pm\text{4.34}}$ \\
\hline
MINLP Solver
& 0.073$_{\pm\text{0.037}}$
& 65791.68$_{\pm\text{21030.78}}$
& 0.774$_{\pm\text{0.013}}$
& 75122.31$_{\pm\text{18706.57}}$
& 0.159$_{\pm\text{0.008}}$
& 1123.47$_{\pm\text{303.61}}$ \\
\rowcolor{gray!15}
\textbf{J3O}
& 0.094$_{\pm\text{0.048}}$
& 797.87$_{\pm\text{8.43}}$
& 0.754$_{\pm\text{0.018}}$
& 937.08$_{\pm\text{29.79}}$
& 0.162$_{\pm\text{0.010}}$
& 165.10$_{\pm\text{12.02}}$ \\
\toprule
\end{tabular}
\label{tab:acc_runtime}
\vspace{-0.19 in}
\end{table*}

We evaluate our framework on three real-world multi-task benchmarks spanning diverse vision tasks and model architectures. \textbf{(1) Taskonomy-5} \cite{zamir2018taskonomy} comprises 5 indoor vision tasks. We evaluate performance using the task-wise losses reported in\cite{standley2020tasks}, with Xception backbones~\cite{chollet2017xception} and task-specific decoders. \textbf{(2) DomainNet-6} \cite{peng2019domainnet} is a multi-domain image classification benchmark covering 6 domains and 100 shared classes. Each domain is treated as a separate task. We fine-tune a shared ResNet-34 encoder with domain-specific heads\cite{wallingford2022task} and report test accuracy. \textbf{(3) Cityscape3D-3} \cite{gahlert2020cityscapes} is a 3-task benchmark for urban 3D perception, covering object detection, semantic segmentation, and depth estimation. We use TaskPrompter\cite{ye2022taskprompter} models, deploying full-precision variants on edge servers and INT8-quantized versions on clients. Performance is measured using task-specific test loss.


Table~\ref{tab:model_stats_combined} summarizes model configurations. In Taskonomy and DomainNet, models are executed in full precision by default, with INT8 client-side variants cached at 25\% memory cost. Due to hard parameter sharing~\cite{fifty2021efficiently,song2022efficient}, these backbones yield nearly uniform cost profiles across tasks. While this symmetry simplifies system behavior, our framework naturally extends to heterogeneous model profiles, as demonstrated in Cityscape3D. The cloud tier is assumed to host an oracle model with ideal task accuracy or loss.


\subsubsection{Device and System Parameters}
We study performance under heterogeneous hardware configurations in Table~\ref{tab:device_config_both}. Taskonomy and DomainNet represent smaller-scale deployments, with clients modeled after low-power devices (e.g., NVIDIA Jetson Nano or TX2) and edge servers with A10-class GPUs. In contrast, Cityscape3D reflects large-scale deployments with higher-end hardware, e.g., clients modeled after NVIDIA RTX 2080 or Mobile 500-class devices and edge servers with H200-class GPUs. Clients and edge servers are allocated 1–3 and 4–6 models, respectively, reflecting realistic storage constraints. This serves as the default configuration; we report results under varied resource budgets.

The simulated system consists of 30 clients and 3 edge servers, with each edge serving 10 clients. For each benchmark, we set the total task arrival rate to $\lambda_{\text{tot}}\!\in\!\{100, 2000, 4000\}$ jobs/sec and allocate this load across clients using a Dirichlet distribution with concentration parameter $p_{\text{client}}\! =\!0.5$. Each client distributes its individual load across tasks using a separate Dirichlet distribution with $p_{\text{task}}\!=\! 0.5$, resulting in task-specific rates $\lambda_t^c$. The offloading budget at each edge server $e$ is given by $\kappa^e\!=\!\chi_\kappa^e \sum_{t \in \mathcal{T}} \sum_{c \in \mathcal{C}e} \lambda_t^c d_t$, where $\chi_\kappa^e = 0.5$ (default), controlling the fraction of traffic that may be offloaded. The cross-edge (i.e., edge-to-cloud) budget is defined as $\kappa^s\!=\!\chi_\kappa^s \sum_e \lambda_t^e$ with $\chi_\kappa^s = 0.25$ by default.


\subsubsection{Baselines}
We compare our proposed J3O algorithm against the following five baselines:
\textbf{(1) MINLP Solver}: Solves the full joint problem optimally using Gurobi.
\textbf{(2) Greedy-AO}: Performs alternating optimization using greedy onloading (accepted only if the objective improves) and LP-based offloading.
\textbf{(3) OPT-AO}: Alternates between optimal onloading (via exhaustive search) and LP offloading.
\textbf{(4) Rand-AO}: Alternates with randomly selected feasible onloading configurations, accepted only if they improve the objective.
\textbf{(5) Full Local}: Executes all tasks using only client-side models, with optimal model selection per client.

\subsection{Performance and Runtime Comparison}
In Fig.~\ref{fig:converge}, our proposed algorithm J3O, evaluated on Taskonomy, shows monotonic objective improvement and converges within a few iterations. Table~\ref{tab:acc_runtime} reports average performance (accuracy/loss) and runtime across benchmarks. J3O consistently achieves near-optimal accuracy while significantly reducing runtime compared to the globally optimal MINLP solver. Specifically, J3O reaches 97.7\%, 97.5\%, and 99.7\% of the optimal performance on Taskonomy, DomainNet, and Cityscape3D, respectively, while using only 1.21\%, 1.25\%, and 14.70\% of the MINLP solver's runtime. While MINLP remains tractable on small problems like Cityscape3D (3 tasks, 7 models), its runtime scales poorly with problem size, making it impractical for real-world systems that require frequent, low-latency optimization. In contrast, J3O remains orders of magnitude faster on larger benchmarks.

Compared to the baselines, J3O offers clear advantages. Greedy-AO is faster due to conservative onloading but incurs a noticeable drop in performance. OPT-AO attains slightly higher accuracy on Taskonomy and DomainNet, but at the cost of $2.60\times$ and $2.15\times$ the runtime of J3O, respectively. On Cityscape3D, J3O outperforms OPT-AO on both accuracy and runtime. Both Rand-AO and Full-Local degrade significantly due to random model selection and lack of offloading, respectively. In contrast, J3O consistently balances accuracy and efficiency, yielding a robust trade-off.

\subsection{Robustness to Varying Resource Constraints}
We evaluate the robustness of J3O under varying system constraints and find that it consistently outperforms all non-optimal baselines. We consider two practical constraint types: (i) the offloading budget, which captures limitations in network bandwidth, and (ii) the client-side compute budget, reflecting the capabilities of resource-constrained devices. Fig.~\ref{fig:network} shows the average test loss ($\downarrow$) on the Taskonomy dataset as the offloading budget $(\kappa^e)_{e\in\mathcal E}$ is varied via the scaling parameter $\chi_\kappa^e$. While all methods improve with greater offloading capacity, J3O remains the closest to the optimal solution except for OPT-AO, which achieves slightly lower loss but at significantly higher runtime. As the offloading budget tightens, J3O’s advantage over heuristic baselines becomes more pronounced, demonstrating its robustness under network constraints. A similar trend appears in Fig.~\ref{fig:compute}, where we vary the client compute budgets $(\beta^c)_{c\in\mathcal C}$ via $\chi_\beta$. J3O consistently outperforms the heuristics and remains near-optimal across the full budget range, all while avoiding the computational overhead of exact solvers.



\begin{figure}[t]
 \centering
 \begin{minipage}[b]{0.24\textwidth}
   \centering   \includegraphics[width=1.0\linewidth]{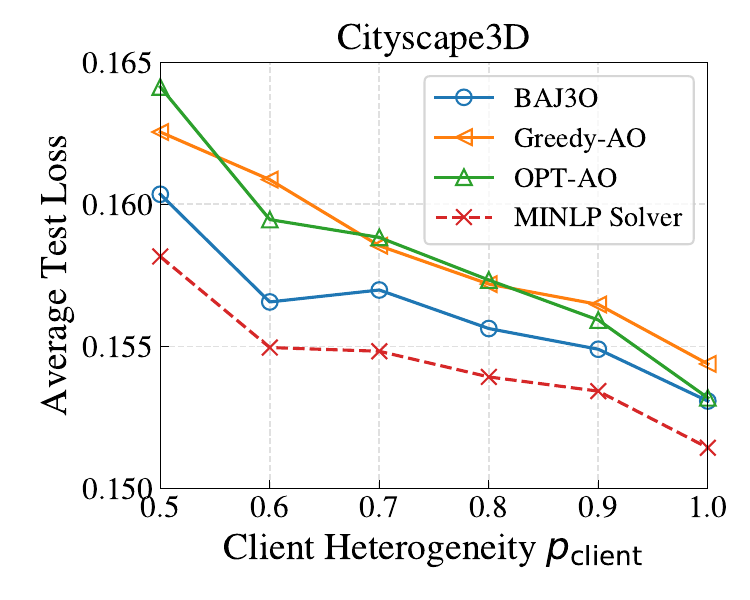}
   \vspace{-0.25in}
   \caption{Client heterogeneity.}
   \label{fig:batch-perf}
 \end{minipage}  
 \begin{minipage}[b]{0.24\textwidth}
   \centering
\includegraphics[width=1.0\linewidth]{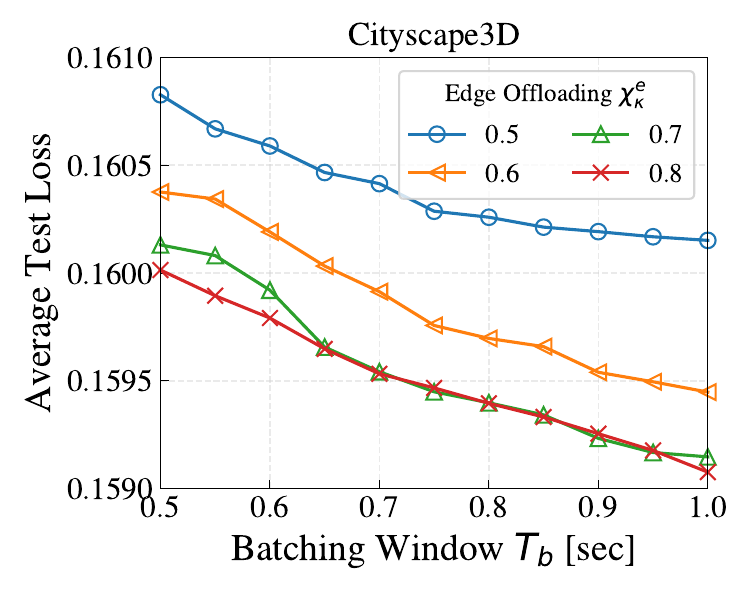}
   \vspace{-0.25in}
   \caption{Batching window.}
   \label{fig:batch-latency}
 \end{minipage}
\end{figure}

\subsection{Batching-Aware Joint Onloading and Offloading Results}

We evaluate BAJ3O on the Cityscape3D benchmark, estimating the batching setup cost $\nu_m^e$ via linear regression on measured latency profiles. The decision horizon is set to $10$s, with batching intervals of $T_b\!=\!0.5$s. As shown in Fig.~\ref{fig:batch-perf}, BAJ3O outperforms all baselines across varying client heterogeneity levels, achieving the smallest gap to the global optimum. Notably, BAJ3O surpasses OPT-AO, particularly under high heterogeneity. 
While both methods incorporate batching constraints during model onloading, OPT-AO optimally solves each subproblem based on a fixed batching estimate from the previous iteration. This fails to capture true batching dynamics under diverse client workloads, leading to biased early on/offloading decisions and reduced performance. 

In Fig.~\ref{fig:batch-latency}, we vary the batching interval $T_b$ to study its impact on latency and accuracy. As $T_b$ increases, BAJ3O forms larger batches, improving accuracy at the cost of higher delay, highlighting the accuracy–latency trade-off. This effect is stronger with generous edge offloading budgets, where batching is fully leveraged. Under tighter budgets ($\chi_\kappa^e = 0.5$), gains are limited as communication becomes the main bottleneck.

\section{Conclusion}
We presented a unified framework for joint model onloading and hierarchical offloading in distributed multi-task inference systems. By coordinating model placement and task routing across clients, edge servers, and the cloud, our J3O and BAJ3O algorithms enable accurate, efficient inference under tight resource constraints. Experiments on diverse benchmarks show our method achieves near-optimal accuracy with significantly reduced runtime. Our approach can be extended to settings where task demands and resource availability evolve slowly or load estimates are noisy. Indeed, our initial investigations indicate that this can be done efficiently, yielding excellent on/offloading decisions over time with minimal overhead.




\newpage
\bibliographystyle{IEEEtran}
\bibliography{infocom}

\end{document}